\def\eqref#1{equation~\ref{#1}}
\def\1{\bm{1}}
\DeclareMathAlphabet{\mathsfit}{\encodingdefault}{\sfdefault}{m}{sl}
\SetMathAlphabet{\mathsfit}{bold}{\encodingdefault}{\sfdefault}{bx}{n}
\newtheorem{theorem}{Theorem}[section]
\newtheorem{corollary}{Corollary}[theorem]
\newtheorem{lemma}[theorem]{Lemma}
\newtheorem{definition}{Definition}
\titlespacing\section{0pt}{6pt}{0pt}
\titlespacing\subsection{0pt}{4pt}{0pt}
\title{Beyond Noisy-TVs: Noise-Robust Exploration Via Learning Progress Monitoring}
\author{Zhibo Hou, Zhiyu An, Wan Du\\
Department of Computer Science and Engineering\\
University of California, Merced\\
\texttt{\{zhou6,zan7,wdu3\}@ucmerced.edu}
}
\begin{document}

\newcommand\blfootnote[1]{%
  \begingroup
  \renewcommand\thefootnote{}\footnote{#1}%
  \addtocounter{footnote}{-1}%
  \endgroup
}
\maketitle
\let\thefootnote\relax\footnotetext{Code available at \url{https://github.com/Akuna23Matata/LPM_exploration}}
\begin{abstract}
When there exists an unlearnable source of randomness (noisy-TV) in the environment, a naively intrinsic reward driven exploring agent gets stuck at that source of randomness and fails at exploration.
Intrinsic reward based on uncertainty estimation or distribution similarity, while eventually escapes noisy-TVs as time unfolds, suffers from poor sample efficiency and high computational cost. 
Inspired by recent findings from neuroscience that humans monitor their improvements during exploration, we propose a novel method for intrinsically-motivated exploration, named Learning Progress Monitoring (LPM).
During exploration, LPM rewards model improvements instead of prediction error or novelty, effectively rewards the agent for observing learnable transitions rather than the unlearnable transitions.
We introduce a dual-network design that uses an error model to predict the expected prediction error of the dynamics model in its previous iteration, and use the difference between the model errors of the current iteration and previous iteration to guide exploration.
We theoretically show that the intrinsic reward of LPM is zero-equivariant and a monotone indicator of Information Gain (IG), and that the error model is necessary to achieve monotonicity correspondence with IG.
We empirically compared LPM against state-of-the-art baselines in noisy environments based on MNIST, 3D maze with 160x120 RGB inputs, and Atari.
Results show that LPM's intrinsic reward converges faster, explores more states in the maze experiment, and achieves higher extrinsic reward in Atari.
This conceptually simple approach marks a shift-of-paradigm of noise-robust exploration.
\end{abstract}

\section{Introduction}


A major obstacle to efficient exploration in complex environment is the sparse reward problem \citep{randlov1998learning, pathak2017curiosity}. In many real world tasks, extrinsic reward are rare and delayed, for instance agents may only receive a positive reward signal only after reaching the exit of a maze, or a robot arm requires completing a long sequence of actions before a reward is given. In such settings, random exploration become highly inefficient because the agent receives little to no guidance from the environment. To address this challenge, researchers have introduced intrinsic reward \citep{schmidhuber1991possibility, thrun1992efficient, ladosz2022exploration}, which act as additional internal reward signals during exploration.
Curiosity-driven exploration methods \citep{burda2018exploration, pathak2017curiosity} use prediction error or uncertainty estimation to generate intrinsic rewards, while episodic bonus methods \citep{badia2020never, henaff2022exploration, zhang2021noveld} encourage exploration by rewarding state novelty. These 
intrinsic reward based methods encourage the agent to explore systematically even in absence of extrinsic feedback, thereby improving exploration efficiency and enabling progress on sparse reward tasks.

While promising, this class of methods suffers from a long-standing challenge known as the noisy-TV problem \citep{burda2018exploration}. For example, a naively curious agent may become fixated on the unpredictability of static noise on a TV screen, wasting time gathering observations that do not improve its world model. Despite being a well-known issue \citep{schmidhuber1991possibility, burda2018large}, robust exploration in the presence of noisy-TVs remains an open problem.
Existing works \citep{mavor2022stay, wang2024rethinking, jiang2025episodic} focused on isolating the uncertainty that can be reduced by more data (known as epistemic uncertainty) from the uncertainty that is irreducible by more data (aleatoric uncertainty), and filter the intrinsic reward signals to only encourage exploring the former type of uncertainty, which is also referred as novelty.
However, distinguishing these two types of uncertainty is extremely challenging. 
Most approaches require either a strong prior over the state space \citep{wang2024rethinking, jiang2025episodic, henaff2023study, zhang2021noveld} or massive amounts of data \citep{burda2018exploration, mavor2022stay} before the agent can reliably filter out noise.
This slow convergence causes early exploration to be dominated by noise, wasting large numbers of samples and limiting the agent's ability to explore effectively in complex, real-world environments.

Recent neuroscience findings \citep{ten2021humans} demonstrate that humans naturally monitor \textit{learning progress} during exploration.
During exploration, humans tend to observe the transitions that makes them learn the most about the dynamics.
Because observing an unlearnable transition does not produce learning progress, this strategy is naturally robust to noisy-TV distractions and promotes efficient exploration by directing effort toward the most informative experiences.
In the context of intrinsic reward, this insight suggests that we should reward model improvement rather than prediction error or state novelty.
Inspired by this study, we propose a fundamentally different approach to intrinsic motivation, called \textbf{Learning Progress Monitoring (LPM)}.
In this work, we first develop a formal definition of the noisy-TV problem, then introduce our solution and complete algorithm, LPM, as a new form of intrinsic reward. We then provide a theoretical analysis showing that our approach is monotonically related to information gain. Finally, we conduct comprehensive experiments to evaluate the empirical performance of LPM and demonstrate its advantages over existing intrinsic motivation methods.
We summarize our contributions as the followings:
\begin{itemize}
\item We propose a novel intrinsic motivation-driven exploration method, Learning Progress Monitoring, inspired by recent neuroscience research findings.
\item We provide a theoretical analysis showing that LPM is zero-equivariant and a monotone indicator of information gain, and that our dual-network design is necessary to achieve such properties.
\item We conduct comprehensive experiments and demonstrate the superior efficiency and noise robustness of our method compared with the state-of-the-art baselines.
\end{itemize}
\vspace{-1em}
\section{Background}

\subsection{reinforcement learning and mathematical formulation.}
We consider the reinforcement learning (RL) problem formalized as a \textit{Partially Observable Markov Decision Process (POMDP)} defined by the tuple 
$\mathcal{M} = (\mathcal{S}, \mathcal{A}, \mathcal{T}, r, \Omega, O, \gamma)$,
where $\mathcal{S}$ is the state space, $\mathcal{A}$ is the action space, 
$\mathcal{T}: \mathcal{S} \times \mathcal{A} \times \mathcal{S} \rightarrow [0,1]$ is the transition probability function, 
$r: \mathcal{S} \times \mathcal{A} \rightarrow \mathbb{R}$ is the reward function, 
$\Omega$ is the observation space, 
$O: \mathcal{S} \times \mathcal{A} \times \Omega \rightarrow [0, 1]$ is the observation function, where $O(o | s', a)$ gives the probability of receiving observation $o$ after taking action $a$ and transitioning to a new state $s'$, 
and $\gamma \in [0,1]$ is the discount factor. 

The reinforcement learning agent's behavior is defined by a policy $\pi: \Omega \times \mathcal{A} \rightarrow [0,1]$, 
where $\pi(a|o)$ represents the probability of taking action $a$ given the current observation $o \in \Omega$. 
The objective in RL is to find an optimal policy $\pi^*$ that maximizes the expected cumulative discounted reward:
\[\pi^* = \arg\max_{\pi} \mathbb{E}_{\pi} \left[ \sum_{t=0}^{T} \gamma^t r(s_t, a_t) \right].\]
\vspace{-1em}
\subsection{Intrinsic Motivation}
Extrinsic reward is the environmental reward usually provided when agent accomplished certain task or achieved certain goal. In environments with sparse and delayed extrinsic rewards, exploration is highly inefficient because the agent receives little to no feedback about which behaviors lead to success. To address this challenge, intrinsic reward signal are introduced to motivate systematic exploration \citep{ryan2000intrinsic, burda2018large, badia2020never}.
Formally, the agent's total reward at time step $t$ is given by:
\[
r_t = r_t^{\text{e}} + \beta \, r_t^{\text{i}},
\]
where $r_t^{\text{e}}$ is the extrinsic reward provided by the environment, $r_t^{\text{i}}$ is the intrinsic reward generated by the agent, and $\beta$ is a weighting coefficient that balances the two signals.

Intrinsic motivation methods differ primarily in how $r_t^{\text{i}}$ is defined. 
Curiosity-driven methods \citep{burda2018exploration, pathak2017curiosity} generate intrinsic rewards based on \textit{prediction error} or \textit{uncertainty estimation}, encouraging the agent to visit transitions that are surprising or poorly understood. 
In contrast, episodic bonus methods \citep{badia2020never, henaff2022exploration, zhang2021noveld} focus on \textit{state novelty}, rewarding the agent for visiting states that have not been encountered within a given episode.
These intrinsic reward signals guide the agent to explore more effectively, enabling progress in sparse-reward tasks by providing continuous internal feedback.
\subsection{the noisy tv problem}
Intrinsic motivation may fail in environments with high stochasticity due to their inability to distinguish meaningful uncertainty or novelty from random noise. To formalize this, we distinguish between two types of uncertainty in the agent's predictive model of the environment dynamics. \emph{Epistemic uncertainty} (also known as model uncertainty) arises from a lack of knowledge about the true underlying dynamics or observation model. It reflects the model's ignorance due to limited data and can be reduced by collecting more informative experience.
\emph{Aleatoric uncertainty} is inherent to the environment and arises from stochasticity in the dynamics or observation processes themselves. 
It reflects noise or randomness that cannot be reduced even with infinite data, e.g. sensor noise.

The \textit{noisy-TV problem} \citep{burda2018exploration} is a classic failure case when the intrinsic motivation mechanism confuses aleatoric uncertainty with epistemic uncertainty. 
The TV may already be on in the environment (\textit{state noise}), or the agent may hold a remote and accidentally turn it on by pressing a button (\textit{action-triggered noise}). 
In both cases, the resulting observations are highly unpredictable and contain no learnable structure.
Formally, in a POMDP setting, the next observation, $o' = g(s', a) + \epsilon$, can be decomposed into a deterministic transition from next state and action, $g(s', a)$, and a unlearnable noise independent of environment dynamics, $\epsilon$.
As the magnitude of $\epsilon$ increases, the observations become dominated by environment-independent randomness. 
Consequently, intrinsic motivation methods that rely on novelty or prediction error become increasingly driven by aleatoric uncertainty rather than epistemic uncertainty, causing the agent to focus on noise rather than meaningful transitions. 
\vspace{-1em}
\section{Learning Progress Monitoring}

We use $t$ to denote the environment time steps and $\tau$ to denote model updating steps, such that the $i$-th model updating step $\tau_i$ happens in the $(N\cdot i)$-th environment step $t_{N\cdot i}$, where $N$ is an integer that sets the length of model updating cycle. 
Then, we use $f^{(\tau)}_\theta$ to represent a model parameterized by $\theta$ after $\tau$-th model updating step.

Subsequently, we consider a model-based reinforcement learning setting, where the agent periodically updates a dynamics model \( f_\theta \) that gives the prediction of next observation \( \hat{o}_{t+1} \) given the current observation \( o_t \) and action \( a_t \): $\hat{o}_{t+1} = f^{(\tau)}_\theta(o_t, a_t)$.
Let $\mathcal{B}$ consists of $(o_t, a_t, o_{t+1})$ be the replay buffer of transition dynamics used to fit $f_\theta$. Upon observing the true next observation $o_{t+1}$, we can compute the difference between the true and predicted next observations. We define the log Mean Squared Error (MSE) of $\tau$-th dynamics model's prediction at time step $t$ to be $\varepsilon_{t}^{(\tau)}$:
\begin{equation}\label{Eq. epsilon}
    \varepsilon^{(\tau)}_t(o_{t+1}) = \log\left(\mathrm{MSE}(o_{t+1}, \hat{o}_{t+1})\right)
    = \log\left( \frac{1}{\text{dim}(\Omega)}\left\| o_{t+1} - f^{(\tau)}_\theta(o_t, a_t) \right\|_F^2 \right)
\end{equation}
Intuitively, $\varepsilon^{(\tau-1)}_t(o_{t+1}) - \varepsilon^{(\tau)}_t(o_{t+1})$ captures the prediction accuracy improvement that model $f_\theta$ gained from the $\tau$-th updating step. A naive method to estimate the said improvement would be to save $f^{(\tau-1)}_\theta$ and infer it on the current $(o_t, a_t)$ pair. 
Instead, we propose measuring the \textit{expected error} of the last dynamics model $\mathbb{E}_{\mathcal{D}}[\varepsilon^{(\tau - 1)}_t(o_{t+1})]$ and compare it with the error of the current model $\varepsilon^{(\tau)}_t(o_{t+1})$. 
In Section \ref{Sec: theoretical analysis}, we will show that this modification is the key to ensure the quality of the intrinsic reward signal.
To measure the expected error, we introduce a fixed-size replay queue $\mathcal{D}$ with size $d$ where each entry consists of $(o_t, a_t, \varepsilon^{(\tau)}_t(o_{t+1}))$.
Then, we introduce a separate \textit{error model} $g_\phi \colon \mathcal{O} \times \mathcal{A} \to \mathbb{R}$, parameterized by \( \phi \), which predicts the error of the dynamics model before the last updating step:
\begin{align}
    g^{(\tau)}_\phi(o_t, a_t) \approx \mathbb{E}_{\mathcal{D}}\left[\varepsilon^{(\tau - 1)}_t(o_{t+1})\right]
\end{align}
The dynamics model $f_\theta$ and error model $g_\phi$ are updated simultaneously with the most up-to-date replay buffers $\mathcal{B}$ and $\mathcal{D}$, respectively. 
At each updating step $\tau$, because $\mathcal{D}$ only contains the prediction errors using the dynamics model that is updated up to the last updating step, $g_\phi^{(\tau)}$ is fit using the prediction errors of the last dynamics model $f_\theta^{(\tau - 1)}$.

Building on the above design, we introduce our intrinsic reward formulation based on \textit{learning progress monitoring} with method summarized in Algorithm \ref{alg:learning_progress}:
\begin{align}
    r^i_t = \mathbb{E}_{\mathcal{D}}\left[\varepsilon^{(\tau - 1)}_t(o_{t+1})\right] - \varepsilon^{(\tau)}_t(o_{t+1}) = g^{(\tau)}_\phi(o_t, a_t) - \varepsilon^{(\tau)}_t(o_{t+1})
\end{align}


\begin{algorithm}[htbp]
\footnotesize
\caption{Learning Progress Monitoring Exploration}
\label{alg:learning_progress}
\begin{algorithmic}[1]

\REQUIRE policy model $\pi$, dynamics model $f_\theta$, error model $g_\phi$, replay queue $\mathcal{D}$ with fixed size $d$, replay buffer $\mathcal{B}$, intrinsic reward weight $\beta$, update cycle $N$

\WHILE{not converged}
    \STATE Observe $o_t$, sample $a_t \sim \pi(\cdot | o_t)$, execute $a_t$ in the environment and observe $o_{t+1}, r_t^e$
    \STATE Compute $\varepsilon^{(\tau)}_t(o_{t+1})$ by Equation \ref{Eq. epsilon}
    \STATE Push $(o_t, a_t, o_{t+1})$ to replay buffer $\mathcal{B}$
    \STATE Push $(o_t, a_t, \varepsilon^{(\tau)}_t(o_{t+1}))$ to fixed-size queue $\mathcal{D}$
    \STATE $r_t^i = g^{(\tau)}_\phi(o_t, a_t) - \varepsilon^{(\tau)}_t(o_{t+1})$ if $|\mathcal{D}| = d$, else $0$
    \STATE $r_t = r_t^e + \beta \cdot r_t^i$
    \STATE Update policy $\pi$ using $r_t$ and any RL algorithm
    \IF {$t \text{ mod } N = 0$}
    \STATE Update $\tau$; Update $f_\theta \rightarrow \mathbb{E}_{\mathcal{B}^{(< \tau)}}[o_{t+1}]$, $g_\phi \rightarrow \mathbb{E}_{\mathcal{D}^{(< \tau)}}[\varepsilon^{(\tau - 1)}_t(o_{t+1})]$
    \ENDIF
\ENDWHILE

\end{algorithmic}
\end{algorithm}

\section{Theoretical Analysis of LPM}\label{Sec: theoretical analysis}

To formally demonstrate the rationale behind our Learning Progress Monitoring (LPM) intrinsic reward, we provide the following two analyses:
\begin{enumerate}
\itemsep 0em 
\vspace{-.5em}
    \item The monotonicity and zero-equivariance relationships between $r^i$ and the information gain (IG) of the dynamics model parameters,
    \item The necessity of using $g_\phi$ to measure the expected prediction error of the previous model.
\end{enumerate}

\subsection{Relationship between $r^i$ and Information Gain}

We begin by defining information gain in a way consistent with our setup.

\begin{definition}[Information Gain]\label{definition: IG}
Let $\theta \in \Theta$ be model parameters and $D$ a dataset. Let $p(\theta \mid D)$ denote the posterior distribution of $\theta$ given $D$, and let $p(D)$ denote the marginal likelihood $p(D) = \int_{\Theta} p(D \mid \theta) \, p(\theta) \, d\theta$.
The \emph{information gain} (IG) provided by the data $D$ about $\theta$ is defined as
\[
\mathrm{IG} := \mathbb{E}_{p(\theta \mid D)}[\log p(D \mid \theta)] - \log p(D) = \mathrm{KL}\big(p(\theta \mid D) \;\|\; p(\theta)\big)\footnote{Derivation is in Appendix \ref{appendix: derivation of IG to KL}.}.
\]
This measures the expected reduction in uncertainty about $\theta$ after observing $D$, and depends only on the marginal likelihood and posterior.
\end{definition}

Here, $p(D \mid \theta)$ denotes the likelihood of the dataset $D$ under model parameters $\theta$. 
Previous work also use KL--divergence as a metric for information gain \citep{bottero2022information}, which is equivalent to our definition.
We assume an i.i.d.\ Gaussian observation model: $o_{t+1} = f_\theta(o_t,a_t) + \varepsilon_t, \varepsilon_t\sim\mathcal N(0,\sigma^2 I)$.
Hence, for a dataset $D=\{(o_t,a_t,o_{t+1})\}_{t=1}^T$, $\log p(D\mid\theta)
= -\frac{Td}{2}\log(2\pi\sigma^2) - \frac{Td}{2\sigma^2} \mathrm{MSE}(\theta)$
where $d=\mathrm{dim}(\Omega)$ and
\(\mathrm{MSE}(\theta)=\tfrac{1}{Td}\sum_{t=1}^T\|o_{t+1}-f_\theta(o_t,a_t)\|^2\).
Note that the first term in the above $\log p(D\mid\theta)$ is a constant.
For the rest of the analysis we will use the log–MSE surrogate $\log p(D\mid\theta) \approx -c\log \mathrm{MSE}(\theta) + \mathrm{const}(D)$,
which preserves monotonicity with \(\mathrm{MSE}(\theta)\) while improving numerical stability of the reward.


Using this notion of IG, we can formalize the connection between LPM's intrinsic reward and the expected reduction in model uncertainty:

\begin{theorem}[Monotonicity and Zero Equivalence] \label{theorem: monotonicity and zero equivariance}
Let $\theta \in \Theta$ be model parameters with prior $p(\theta)$ and posterior $p(\theta \mid D)$ given dataset $D$. Assume the likelihood depends on $\theta$ only through a positive scalar function $\mathrm{MSE}(\theta)$ such that $\log p(D\mid \theta) = -c \log \mathrm{MSE}(\theta) + \mathrm{const}(D), c>0$.
For intrinsic reward $r^i := \mathbb{E}_{p(\theta)}[\log \mathrm{MSE}(\theta)] - \log \mathrm{MSE}(\theta_D)$
where $\theta_D$ is a chosen point in $\Theta$ satisfying $\log p(D\mid \theta_D) \ge \mathbb{E}_{p(\theta\mid D)}[\log p(D\mid \theta)]$,
Then the following hold:
\begin{enumerate}
\itemsep -.5em 
\vspace{-.5em}
    \item $r_i \ge \frac{1}{c} \, \mathrm{IG}$, where $\mathrm{IG}$ is defined by Definition \ref{definition: IG}.
    \item $\mathrm{IG} = 0 \implies r_i = 0$.
    \item $r_i = 0 \implies \mathrm{IG} = 0$ under the identifiability condition that the likelihood is non-constant and injective in $\mathrm{MSE}(\theta)$.
\end{enumerate}
\end{theorem}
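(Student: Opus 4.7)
The plan is to reduce all three claims to a single identity for $r^i$ written in terms of log--likelihoods instead of log--MSEs, and then invoke Jensen's inequality together with the defining inequality on $\theta_D$. Substituting $\log\mathrm{MSE}(\theta)=-\tfrac{1}{c}\log p(D\mid\theta)+\tfrac{1}{c}\mathrm{const}(D)$ into the definition of $r^i$, the additive term $\mathrm{const}(D)$ cancels and one obtains
\[
r^i \;=\; \tfrac{1}{c}\Bigl(\log p(D\mid\theta_D)-\mathbb{E}_{p(\theta)}[\log p(D\mid\theta)]\Bigr).
\]
This cancellation is the algebraic source of the ``zero--equivariance'' flavor of the reward and is the first thing I would write down.

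\textbf{Claim 1 (monotone lower bound).} I would form the algebraic decomposition
\[
r^i-\tfrac{1}{c}\mathrm{IG} \;=\; \tfrac{1}{c}A+\tfrac{1}{c}B,
\]
where $A:=\log p(D\mid\theta_D)-\mathbb{E}_{p(\theta\mid D)}[\log p(D\mid\theta)]$ and $B:=\log p(D)-\mathbb{E}_{p(\theta)}[\log p(D\mid\theta)]$. Term $A$ is non-negative by the standing hypothesis on $\theta_D$. Term $B$ is the standard ``evidence--ELBO'' gap: since $p(D)=\mathbb{E}_{p(\theta)}[p(D\mid\theta)]$ and $\log$ is concave, Jensen's inequality gives $B\ge 0$. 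Summing the two produces $r^i\ge\tfrac{1}{c}\mathrm{IG}$.

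\textbf{Claims 2 and 3 (zero equivalence).} Claim~3 ($r^i=0\Rightarrow\mathrm{IG}=0$) follows immediately by chaining Claim~1 with the non-negativity of KL: from $0=r^i\ge\tfrac{1}{c}\mathrm{IG}\ge 0$ we force $\mathrm{IG}=0$. I would note that the identifiability clause enters here only to rule out the degenerate case where $\log p(D\mid\theta)$ is $\theta$--independent, in which both IG and $r^i$ vanish for trivial reasons. For Claim~2 ($\mathrm{IG}=0\Rightarrow r^i=0$), $\mathrm{KL}(p(\theta\mid D)\,\|\,p(\theta))=0$ forces $p(\theta\mid D)=p(\theta)$ almost everywhere, so Bayes' rule gives $p(D\mid\theta)\equiv p(D)$ on the support of $p(\theta)$. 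Both expectations appearing in $r^i$ therefore collapse to $\log p(D)$, and both $A$ and $B$ in the decomposition collapse to zero, yielding $r^i=0$.

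\textbf{Expected difficulty.} The main subtlety is in Claim~2: the defining inequality $\log p(D\mid\theta_D)\ge\mathbb{E}_{p(\theta\mid D)}[\log p(D\mid\theta)]$ could in principle be strict even when the likelihood is constant on the support of the prior (e.g.\ if $\theta_D$ is an MLE lying outside that support, which would leave $r^i>0$ despite $\mathrm{IG}=0$). I would close this gap by interpreting $\theta_D$ as a parameter lying in the support of $p(\theta)$ -- a natural convention given that $\theta_D$ represents the learner's post-update parameters -- so that $\log p(D\mid\theta_D)=\log p(D)$ whenever the likelihood is support-constant. Everything else reduces to the substitution step, Jensen's inequality, and non-negativity of KL, none of which is technically hard.
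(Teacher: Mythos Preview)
Your proposal is correct and follows essentially the same route as the paper: rewrite $r^i$ in log-likelihood form via the substitution $\log\mathrm{MSE}(\theta)=-\tfrac{1}{c}\log p(D\mid\theta)+\tfrac{1}{c}\mathrm{const}(D)$, then combine Jensen's inequality on $\log p(D)$ with the standing hypothesis on $\theta_D$ to obtain Claim~1, and derive Claims~2 and~3 from $\mathrm{IG}=0\Leftrightarrow p(\theta\mid D)=p(\theta)$. Your explicit $A+B$ decomposition and the sandwich argument $0=r^i\ge\tfrac{1}{c}\mathrm{IG}\ge0$ for Claim~3 are slightly cleaner than the paper's presentation (which argues Claim~3 via ``$\log\mathrm{MSE}(\theta)$ constant under $p(\theta)$''), and you correctly flag the support issue for $\theta_D$ in Claim~2, a point the paper glosses over; otherwise the two proofs coincide.
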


Proof is in Appendix \ref{appendix: proof of theoremt mono}.
In practice we set $\theta_D$ to the MLE (or an approximate likelihood maximizer); Lemma~\ref{lemma: mle-satisfies} in the appendix shows this choice satisfies the required inequality.
Theorem \ref{theorem: monotonicity and zero equivariance} establishes that the LPM intrinsic reward is a monotone indicator of the information gain of the dynamics model. Intuitively, a larger positive intrinsic reward corresponds to a larger expected reduction in uncertainty, and zero intrinsic reward occurs if and only if the model has learned nothing new, i.e., the information gain is zero.

\subsection{Necessity of error model $g_\phi$}

The second part of the theoretical analysis aims to justify a key part of the design of LPM:
\[
\textit{Why do we use the error model $g_\phi$ to estimate the expected prediction error of the previous model?}
\]
To answer this, we consider the simpler alternative of using a pointwise intrinsic reward without expectation: $r^{i, \text{point}} := \log \mathrm{MSE}(\theta) - \log \mathrm{MSE}(\theta_D)$
for a single sampled parameter $\theta$.  
We show that using pointwise intrinsic reward breaks the monotone relationship between the intrinsic reward and the information gain, while our design using $g_\phi$ makes $r^i$ a monotone indicator of IG by adding a simple expectation operation upon the pointwise approach.

\begin{theorem}[Necessity of Expectation in Intrinsic Reward for Monotonicity]\label{theorem: necessity of g}
Let $\theta \in \Theta$ be model parameters with prior $p(\theta)$ and posterior $p(\theta\mid D)$ given dataset $D$, and assume $\log p(D \mid \theta) = -c \log \mathrm{MSE}(\theta) + \mathrm{const}(D), c>0$.
Define simple pointwise intrinsic reward
\[
r^{i, \text{point}} := \log \mathrm{MSE}(\theta) - \log \mathrm{MSE}(\theta_D)
\]
for a single parameter $\theta$, and the expectation-based intrinsic reward
\[
r^{i, \text{exp}} := \mathbb{E}_{p(\theta)}[\log \mathrm{MSE}(\theta)] - \log \mathrm{MSE}(\theta_D),
\]
where $\theta_D$ is a chosen point in $\Theta$ satisfying $\log p(D \mid \theta_D) \ge \mathbb{E}_{p(\theta\mid D)}[\log p(D\mid \theta)]$. Then:
\begin{enumerate}
\itemsep -.5em 
\vspace{-.5em}
    \item $r^{i, \text{exp}} \ge \frac{1}{c}\,\mathrm{IG}$, where $\mathrm{IG}$ is defined in Definition \ref{definition: IG}.
    \item There exist $\theta$ for which $r^{i, \text{point}} < 0$ while $\mathrm{IG} > 0$.
\end{enumerate}
Consequently, the expectation in the first term of $r_i$ is necessary to guarantee a deterministic monotone relationship between intrinsic reward and information gain.
\end{theorem}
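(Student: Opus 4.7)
The plan is to handle the two claims separately. Claim~1 is an immediate consequence of Theorem~\ref{theorem: monotonicity and zero equivariance}, while claim~2 requires an explicit finite-support counterexample.

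For claim~1, I observe that $r^{i,\text{exp}}$ as defined is \emph{verbatim} the intrinsic reward $r_i$ analyzed in Theorem~\ref{theorem: monotonicity and zero equivariance}, and the admissibility hypothesis on $\theta_D$ is identical. Invoking part~1 of that theorem gives $r^{i,\text{exp}} \ge \tfrac{1}{c}\,\mathrm{IG}$ with no further work.

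For claim~2, I would exhibit a minimal three-point counterexample. Take $\Theta=\{\theta_1,\theta_2,\theta_3\}$ with $\mathrm{MSE}(\theta_1)<\mathrm{MSE}(\theta_3)<\mathrm{MSE}(\theta_2)$ and a prior placing most of its mass on the worst parameter $\theta_2$ (and nonzero mass on the other two). Under the likelihood $\log p(D\mid\theta)=-c\log\mathrm{MSE}(\theta)+\mathrm{const}(D)$, the posterior shifts mass away from $\theta_2$, so $p(\cdot\mid D)\neq p(\cdot)$ and $\mathrm{IG}=\mathrm{KL}\big(p(\theta\mid D)\,\|\,p(\theta)\big)>0$. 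I then set $\theta_D:=\theta_3$, a point whose log-likelihood exceeds the posterior-weighted mean (so the admissibility condition holds) yet which is strictly worse than the MLE $\theta_1$. Evaluating the pointwise reward at $\theta=\theta_1$, a legal sample under the prior, gives $r^{i,\text{point}}=\log\mathrm{MSE}(\theta_1)-\log\mathrm{MSE}(\theta_3)<0$ while $\mathrm{IG}>0$, as required.

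The only delicate point is tuning the MSE values and prior weights so that $\theta_D=\theta_3$ simultaneously satisfies admissibility yet is strictly worse than $\theta_1$. Admissibility only requires $\log p(D\mid\theta_D)$ to sit above the posterior-weighted mean, which generically leaves room below the MLE, so the two requirements are compatible; a concrete instance with $c=1$, MSE values $(0.5,1,2)$, and prior weights $(0.1,0.8,0.1)$ can be verified by direct computation of the posterior. Together, the two parts show that the prior expectation in the first term of $r^i$ is exactly what converts an otherwise sample-dependent, possibly negative ratio into a deterministic lower bound on information gain, justifying the role of $g_\phi$ in LPM as the estimator of that expectation.
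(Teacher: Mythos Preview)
Your proposal is correct. For claim~1, both you and the paper rely on the same Jensen-inequality chain; you invoke Theorem~\ref{theorem: monotonicity and zero equivariance} directly, while the paper re-derives that inequality inline, so the arguments are identical in substance.

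For claim~2, your route differs from the paper's. The paper argues abstractly: it simply asserts that some $\theta$ with $\log p(D\mid\theta)>\log p(D\mid\theta_D)$ exists (``possible because $\theta$ may be drawn from the prior or outside high-likelihood regions'') and concludes $r^{i,\text{point}}<0$ while $\mathrm{IG}\ge 0$. You instead build an explicit three-point prior, pick a non-MLE but admissible $\theta_D=\theta_3$, and evaluate the pointwise reward at the MLE $\theta_1$. Your construction is more careful: it produces a single concrete instance in which admissibility of $\theta_D$, strict positivity of $\mathrm{IG}$, and negativity of $r^{i,\text{point}}$ are all simultaneously verified, whereas the paper's one-line argument does not check these together and tacitly requires $\theta_D$ not to be the global MLE (otherwise no such $\theta$ exists). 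One small presentational fix: the tuple ``MSE values $(0.5,1,2)$'' conflicts with your stated ordering $\mathrm{MSE}(\theta_1)<\mathrm{MSE}(\theta_3)<\mathrm{MSE}(\theta_2)$ if read as indexed by $(\theta_1,\theta_2,\theta_3)$; state explicitly that $\mathrm{MSE}(\theta_1)=0.5$, $\mathrm{MSE}(\theta_3)=1$, $\mathrm{MSE}(\theta_2)=2$ to avoid ambiguity.
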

Proof is in Appendix \ref{appendix: proof of theorem necessity of g}.
Theorem \ref{theorem: necessity of g} highlights the importance of using the expected prediction error (captured via the error model $g_\phi$) rather than a single-sample estimate. 
Without the expectation, the intrinsic reward can fluctuate and even become negative despite positive information gain, breaking the monotone correspondence. 
By maintaining a running estimate of the expected error, LPM ensures stable and reliable intrinsic rewards that faithfully track the model's learning progress.

In summary, our theoretical analysis justifies both the form of the LPM intrinsic reward and the use of an error model to estimate expected previous errors.
While it is difficult to analyze the robustness of the above Theorems in the absence of the i.i.d. Gaussian observation model assumption, empirical experiments show that LPM consistently outperforms state-of-the-art baselines in various noisy environments, which we present in the following section.

\section{Experiment}
We designed and conducted the experiments to answer three key research questions. \textit{In both environments with and without noisy-TVs,}
\begin{enumerate}
\itemsep 0em
\vspace{-.5em}
    \item \textit{Does the intrinsic reward of LPM converges faster compared with the existing methods while robust to aleatoric uncertainty?}
    \item \textit{In pure exploration tasks, does LPM explore more novel states in a given time duration compared with the existing methods?}
    \item \textit{In tasks with extrinsic reward, does LPM achieve higher extrinsic reward compared with the existing methods?}
\end{enumerate}
To answer the above three research questions, we have designed three dedicated experiments, each for one research question.
To evaluate the \textit{scalability} of LPM, we used experiment environments with large observation space with increasing task complexity, from the noisy MNIST dataset \citep{mavor2022stay} to noisy Atari game RL environments \citep{mazzaglia2022curiosity, wang2024rethinking}.
We also evaluate the robustness and consistency of all methods across deterministic and stochastic environmental conditions throughout the experiments.
We compare LPM against state-of-the-art intrinsic motivation methods representing different algorithmic families.
Table~\ref{tab:baseline_summary} summarizes the baselines, their intrinsic reward formulations, and the category they belong to.
\begin{table}[t!]
\scriptsize
\centering
\caption{Summary of baseline intrinsic motivation methods. $s_t$ and $s_{t+1}$ denote the current and next state, $a_t$ is the action, $\mathcal{M}$ represents episodic memory, and $d(\cdot,\cdot)$ is a distance metric.}
\label{tab:baseline_summary}
\begin{tabular}{l p{5cm} c}
\toprule
\textbf{Method} & \textbf{Intrinsic Reward Function $r^i_t$} & \textbf{Category} \\
\midrule
ICM~\citep{pathak2017curiosity} & $\|\phi(s_{t+1}) - \hat{\phi}(s_{t+1}|s_t,a_t)\|^2$ & Curiosity \\
\midrule
Ensemble~\citep{pathak2019self} & $\mathrm{Var}_{i}[f_i(s_t,a_t)]$ & Curiosity \\
\midrule
RND~\citep{burda2018exploration} & $\|f(s_{t+1}) - \hat{f}(s_{t+1})\|^2$ & Curiosity \\
\midrule
AMA~\citep{mavor2022stay} & $\|s_{t+1}-f(s_t,a_t)\|^2 - \lambda \,\Sigma(s_t,a_t)$ & Curiosity (Epistemic Estimation) \\
\midrule
EME~\citep{wang2024rethinking} & $d(s', s) \cdot \min\{\max\{\zeta(r_s), 1\}, M\}$ & Episodic Bonus (Metric-Based) \\
\midrule
EDT~\citep{jiang2025episodic} & $\min_{s' \in \mathcal{M}} d(s_{t+1}, s')$ & Episodic Bonus (Similarity) \\
\midrule
\textbf{LPM (Ours)} &  $\mathbb{E}_{\mathcal{D}}\left[\varepsilon^{(\tau - 1)}_t(o_{t+1})\right] - \varepsilon^{(\tau)}_t(o_{t+1})$ & \textbf{Learning Progress} \\
\bottomrule
\end{tabular}
\end{table}
\begin{figure}
\vspace{-0.5em}
    \centering
    \begin{subfigure}[b]{0.32\textwidth}
        \centering
        \includegraphics[width=\textwidth]{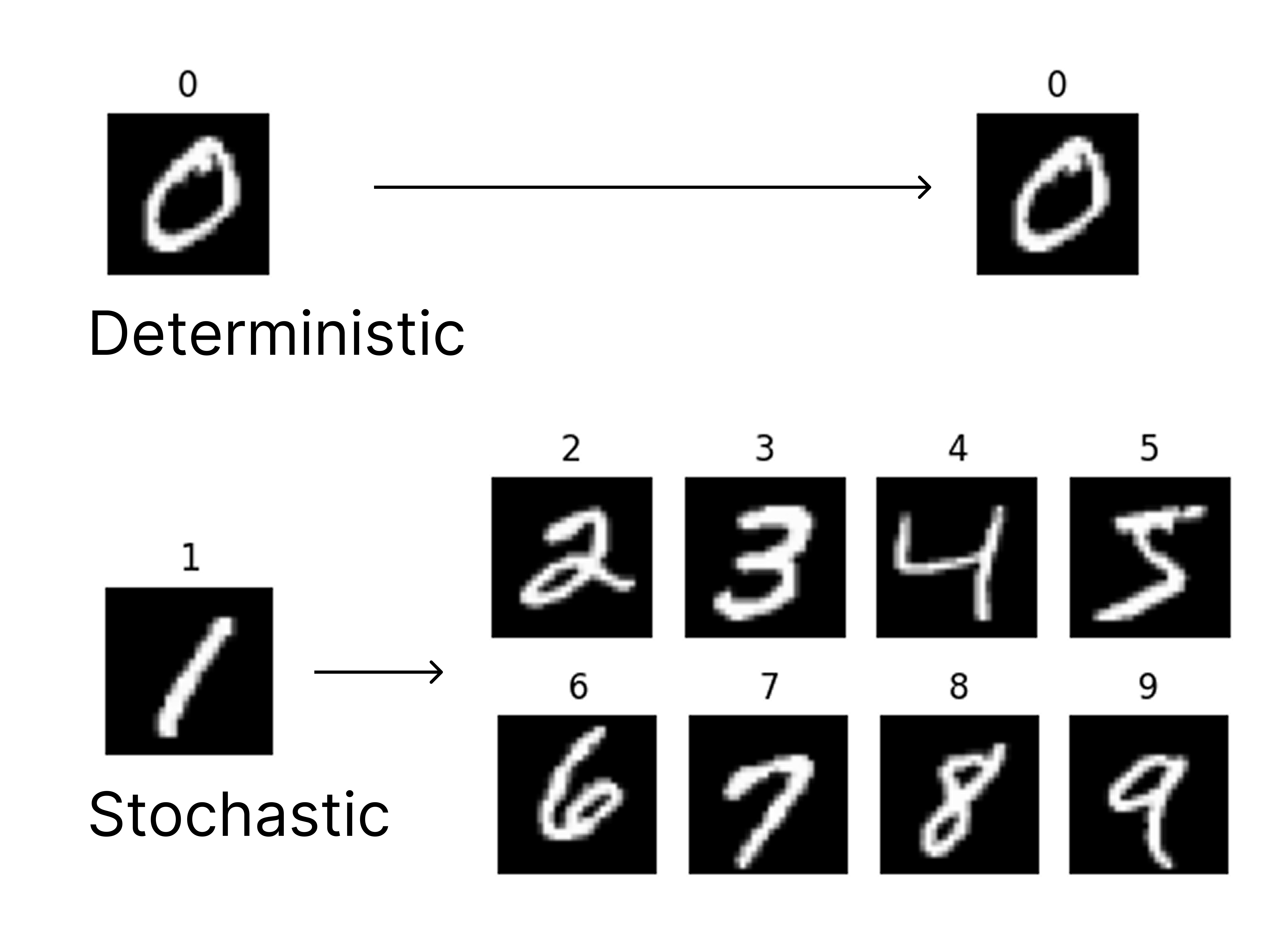}
        \caption{Noisy MNIST}
        \label{fig:env1}
    \end{subfigure}
    \hfill
    \begin{subfigure}[b]{0.32\textwidth}
        \centering
        \includegraphics[width=\textwidth]{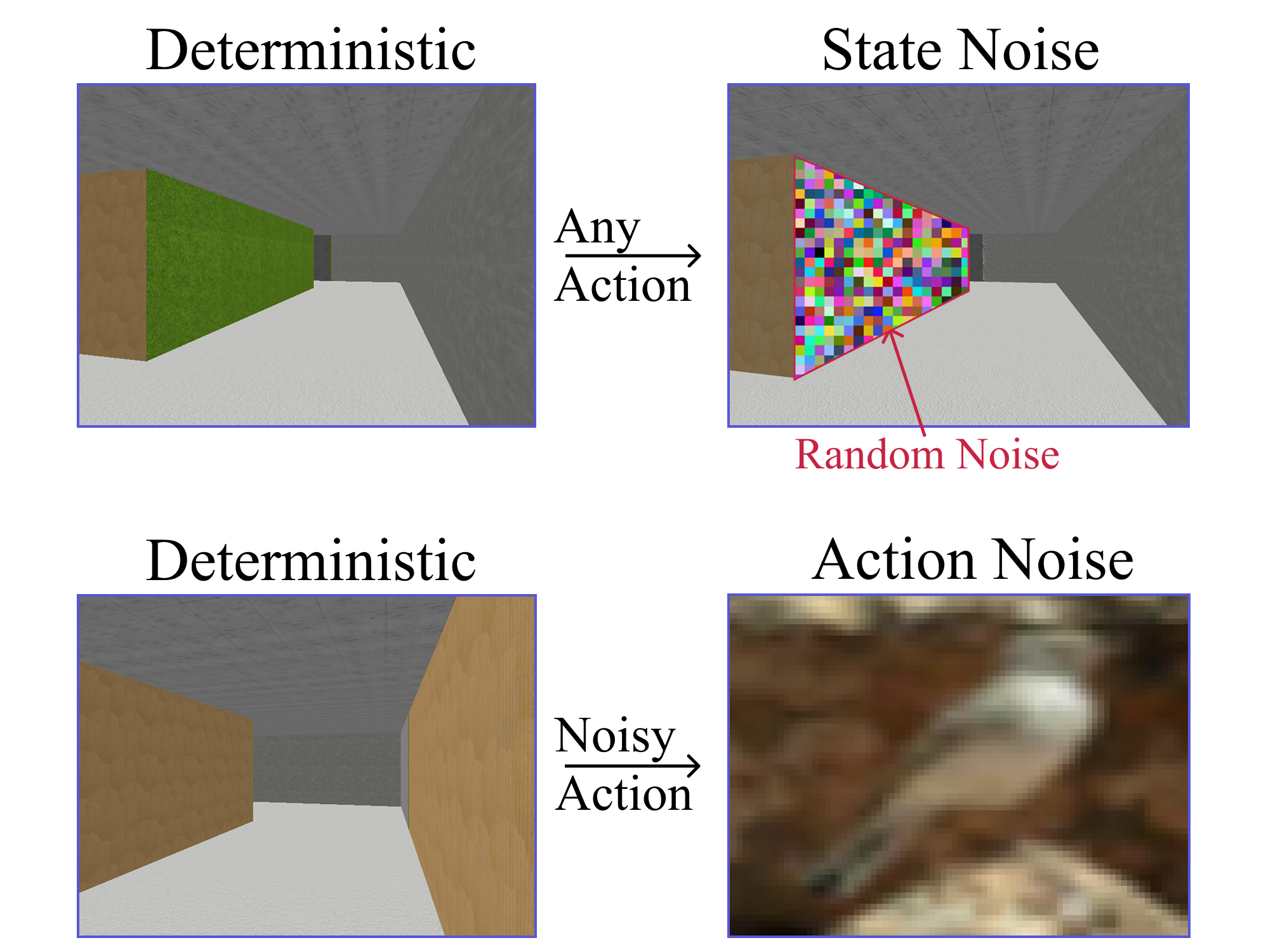}
        \caption{MiniWorld 3D Maze}
        \label{fig:env2}
    \end{subfigure}
    \hfill
    \begin{subfigure}[b]{0.32\textwidth}
        \centering
        \includegraphics[width=\textwidth]{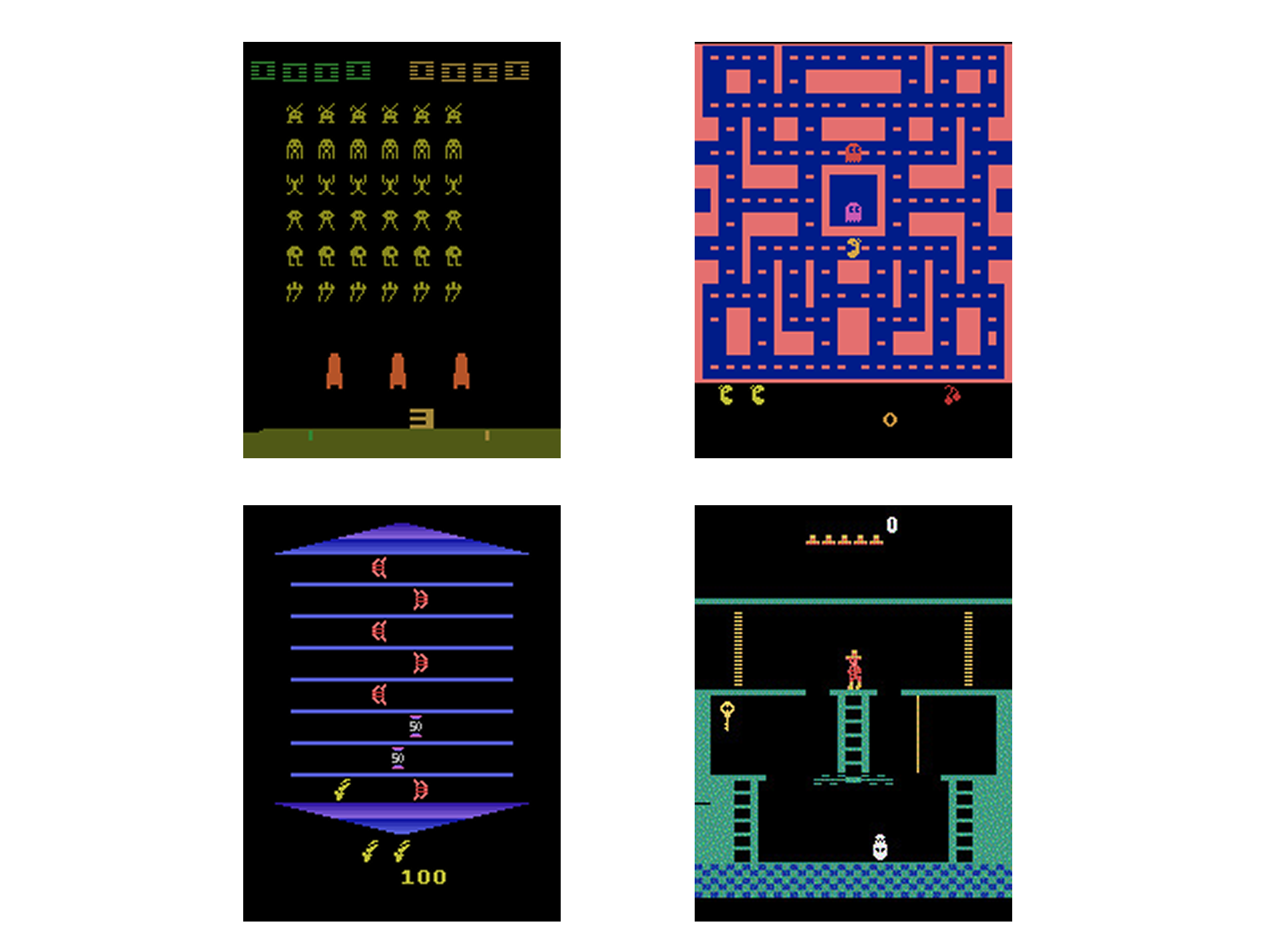}
        \caption{Multiple Atari games}
        \label{fig:env3}
    \end{subfigure}
    \vspace{-0.5em}
    \caption{Rendering of the experiment environments.}
    \vspace{-1em}
    \label{fig:environments}
\end{figure}

From the experiments, we demonstrate that while existing intrinsic motivation methods encounter performance degration in stochastic environments, LPM consistently shows superior performance across both deterministic and stochastic settings.
We now introduce the experiment settings, baselines, and results of each experiment.

\subsection{Noisy MNIST}

To answer the first research question, we evaluate our method using the noisy MNIST dataset following the experimental setup from \citep{mavor2022stay, pathak2019self, mazzaglia2022curiosity}.The noisy MNIST \citep{deng2012mnist} environment contains separated deterministic and stochastic state transitions, allowing us to observe the behavior characteristic across different intrinsic motivation methods. The deterministic state starts with initial state of MNIST image of 0, and transit to next state that is exactly the same as initial state. While the stochastic state starts with initial state of MNIST image 1, and transit to randomly sampled next state of MNIST image 2-9. 
We focus on two representative baselines: AMA, a noise-robust curiosity-driven method, and Episodic Novelty Through Temporal Distance (EDT) \citep{jiang2025episodic}, an similarity based episodic bonus method. These two baselines serve as representatives of their respective families, providing a clear comparison of how different intrinsic motivation mechanisms behave as more transitions are observed.

\begin{wrapfigure}{r}{0.44\textwidth}  
\centering
\vspace{-1em}
\includegraphics[width=0.42\textwidth]{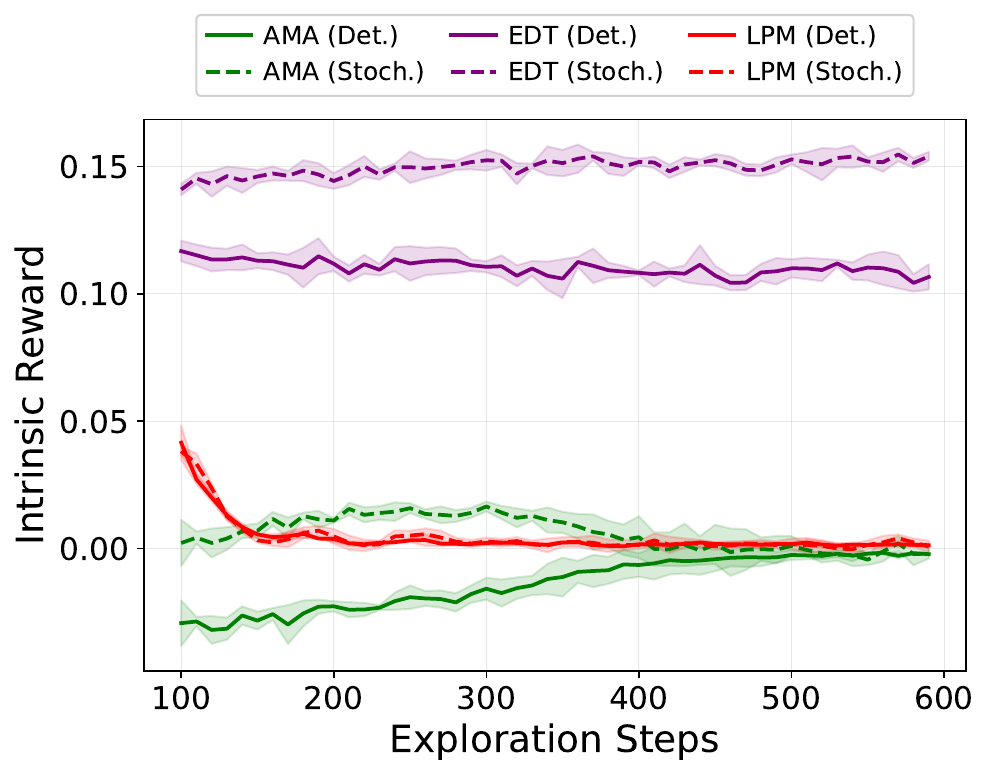}
\caption{Noisy MNIST with deterministic and stochastic transitions over 5 runs.}
\vspace{-1em}
\label{fig:noisy_mnist}
\end{wrapfigure}

Figure \ref{fig:noisy_mnist} shows the shift of intrinsic motivation signal over training steps. Our LPM method provides consistent intrinsic rewards for both deterministic and stochastic transitions from the beginning, with both converging to zero after approximately $150$ steps, representing immediate robustness to environmental stochasticity. AMA eventually converge to zero intrinsic reward for both transition types but requires significantly more exploration steps $(\approx400)$. Notably, AMA provides different reward magnitudes for deterministic versus stochastic transitions during training, meaning agents would temporarily favor stochastic states—indicating partial susceptibility to the noisy TV problem. Finally EDT, measuring similarity between next states, is incapable of overcome the stochasticity in transitions and keep finding the stochastic transitions more interesting.
\begin{figure*}[t!]
\centering
\includegraphics[width=1\textwidth]{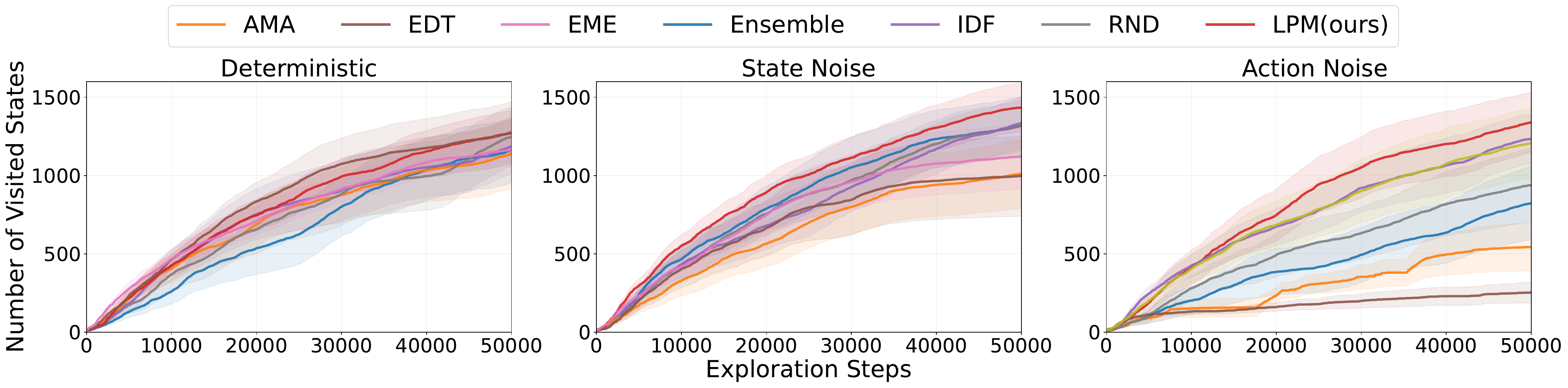} 
\vspace{-1.7em}
\caption{\textbf{Exploration performance across noise conditions, averaged over 10 random seeds.} State coverage during training in 3D maze environment. \textbf{Left:} Deterministic setting (best: EDT state coverage 1276.1). \textbf{Center:}  State noise condition (best: LPM state coverage 1434.0). \textbf{Right:} Action noise setting (best: LPM state coverage 1340.0).}
\vspace{-1em}
\label{miniworld}
\end{figure*}
\subsection{MiniWorld}
To answer the second research question—whether LPM explores more novel states in pure exploration tasks compared with existing methods—we evaluate our method in the 3D MiniWorld environment \citep{MinigridMiniworld23} with high-dimensional observations (160×120 RGB).
This experiment focuses on pure exploration performance, measuring the agent's ability to discover and visit diverse states when guided solely by intrinsic motivation methods. We use a 3-room N-shaped MiniWorld maze with 2,176 visitable states.
Each room has distinct textures to increase visual diversity, making exploration visually complex and closer to real-world environments. Additionally, we test exploration consistency across different noise levels, challenging existing methods that may fail in realistic visual environments where unpredictable stimuli can trigger noisy-TV behavior by creating three experimental conditions to evaluate robustness:
(1) a deterministic setting with fixed textures and transitions,
(2) a state noise setting where the middle room exits a noisy wall displaying random pixel noise at each timestep to test whether agents fixate on unpredictable visual stimuli, and
(3) an action noise setting where selecting the idle action replaces the true observation with a random CIFAR-10 image \citep{cifar}, simulating action-triggered noise. This follows similar noise setting as MiniGrid environment from \citep{mavor2022stay}.

Figure \ref{miniworld} shows the exploration state coverage across three MiniWorld environments, averaged over 10 runs. Our LPM achieved the highest state coverage across all environments with 1,347.6 average visited states, outperforming the second-best method by 95.3 states ($7.6\%$ improvement in exploration efficiency). Notably, LPM demonstrates remarkable robustness to environmental perturbations, maintaining consistent performance across deterministic, state noise, and action noise conditions, while all baseline methods exhibit significant performance degradation in noisy environments—particularly under action noise. For similarity-based intrinsic motivation methods, this degradation in action noise environments can be attributed to the increased visual complexity introduced by noise (CIFAR images), which requires extensive data collection before agents can effectively distinguish signal from stochasticity. This observation aligns with our earlier findings in the Noisy MNIST experiment.

\subsection{Mountain Car}
To complement the MiniWorld experiment—which evaluates state coverage in a discrete action setting, we further conducted a continuous control exploration task. We constructed two variants of the MountainCar-Continuous environment to evaluate LPM under sparse rewards, continuous environments.

The deterministic environment includes the standard two-dimensional state (position, velocity) and one-dimensional continuous action. We remove all original rewards and place three reward points at random locations per run, creating a sparse-reward continuous control task. The stochastic environment extend the action space with an additional noisy action dimension. When this extended action is chosen with a value greater than zero, the environment freezes the true state and outputs a random observation sampled uniformly from [-1,1]. During testing, we separate the MountainCar state space into 100 bins and measure how many unique bins each method visits during training with 5 random seeds. As shown in Table \ref{tab:mountaincar_results}, LPM can motivate exploration in deterministic continues environment, attains the best performance under stochasticity and shows the smallest degradation, with only a 12.4\% drop, which is far lower than all baselines. This highlights LPM’s strong robustness to stochasticity.
\begin{table}[t!]
\centering
\caption{State coverage comparison in the MountainCar continuous-control environment with deterministic and stochastic settings. Best performance in each task is in \textbf{bold}.\color{black}}
\label{tab:mountaincar_results}
\begin{tabular}{lccc}
\toprule
\textbf{Method} &
\textbf{Deterministic Coverage (\%)} &
\textbf{Stochastic Coverage (\%)} &
\textbf{Drop (\%)} \\
\midrule
LPM & $76.50 \pm 9.08$ & $\mathbf{67.04 \pm 14.60}$ & $\mathbf{12.4}$ \\
Ensemble & $\mathbf{91.22 \pm 2.04}$ & $61.02 \pm 5.03$ & $33.1$ \\
RND & $45.50 \pm 14.53$ & $28.00 \pm 10.10$ & $38.5$ \\
AMA & $33.00 \pm 12.31$ & $13.20 \pm 4.62$ & $60.0$ \\
EDT & $82.16 \pm 13.57$ & $53.52 \pm 10.53$ & $34.9$ \\
EME & $89.16 \pm 3.40$ & $32.46 \pm 11.31$ & $63.6$ \\
IDF & $90.92 \pm 5.13$ & $12.80 \pm 3.19$ & $85.9$ \\
\bottomrule
\vspace{-3em}
\end{tabular}
\end{table}

\subsection{Atari}
To answer the third research question, we evaluate our method on several hard exploration Atari games identified in prior work \citep{wang2024rethinking, bellemare2016unifying, wang2023efficient}, including Space Invader and Ms PacMan.
To systematically test noise robustness while preserving meaningful gameplay, we extend each game's action space by adding two idle (no-op) actions.
This preserves the utility of idle actions while allowing for controlled noise injection.
For each game, we construct two environment variants:
(1) the original Atari game without modifications, and
(2) an action noise variant, where selecting an idle action replaces the true observation with a random CIFAR-10 image \citep{cifar}, following the noise-injection setup from \citep{mavor2022stay}.
This design allows us to directly evaluate how intrinsic motivation methods handle noise when exploring challenging, goal-directed tasks.

As shown in Figure \ref{fig:atari_results}, LPM achieves superior performance (4 out of 6) against strong baselines in Atari environment.
More importantly, LPM demonstrates remarkable stability under noise, with only a 3.9\% drop in Space Invader, 4.7\% drop in UpNDown and even 0.3\% improvement in Ms PacMan.
In contrast, other methods show substantial vulnerability: EME, the strongest performer on clean Space Invader, completely fails under noise (100\% drop), and all other baselines suffer varying degrees of performance loss.
These results highlight that LPM provides the most consistent and noise-robust exploration strategy, making it especially suitable for real-world environments where unpredictable noise is common.

\subsection{Montezuma's Revenge}
Beyond the standard Atari benchmarks, we further evaluate LPM on Montezuma’s Revenge, one of the most challenging exploration games where most intrinsic-motivation methods fail to obtain any score. While prior experiments showed that LPM is highly robust to environmental stochasticity, this test examines whether LPM can still guide exploration in extremely sparse-reward settings. Remarkably in Figure \ref{montezuma}, LPM is able to generate meaningful exploration behavior and achieves non-trivial performance within 20M exploration steps, whereas RND requires 50M exploration steps. Since NGU\citep{badia2020never} has no official implementation, we re-implemented it following the original paper. NGU requires days of wall-clock training, so we report only 50M steps, during which it failed to obtain any extrinsic reward. This result suggests that LPM is not only noise-robust but also capable of driving exploration in hard long-horizon tasks such as Montezuma’s Revenge.

\subsection{Computational overhead}
While LPM employs a dual-network structure to estimate learning progress, its computational complexity remains modest when compared to baseline methods such as AMA \citep{mavor2022stay}, Ensemble \citep{pathak2019self}, and EME \citep{wang2024rethinking}. AMA uses a double-headed neural network, which require similar computational cost with LPM. Ensemble and EME both require multiple models and hence require significantly higher computational costs.
\begin{figure*}[t!]
\centering
\includegraphics[width=1\textwidth]{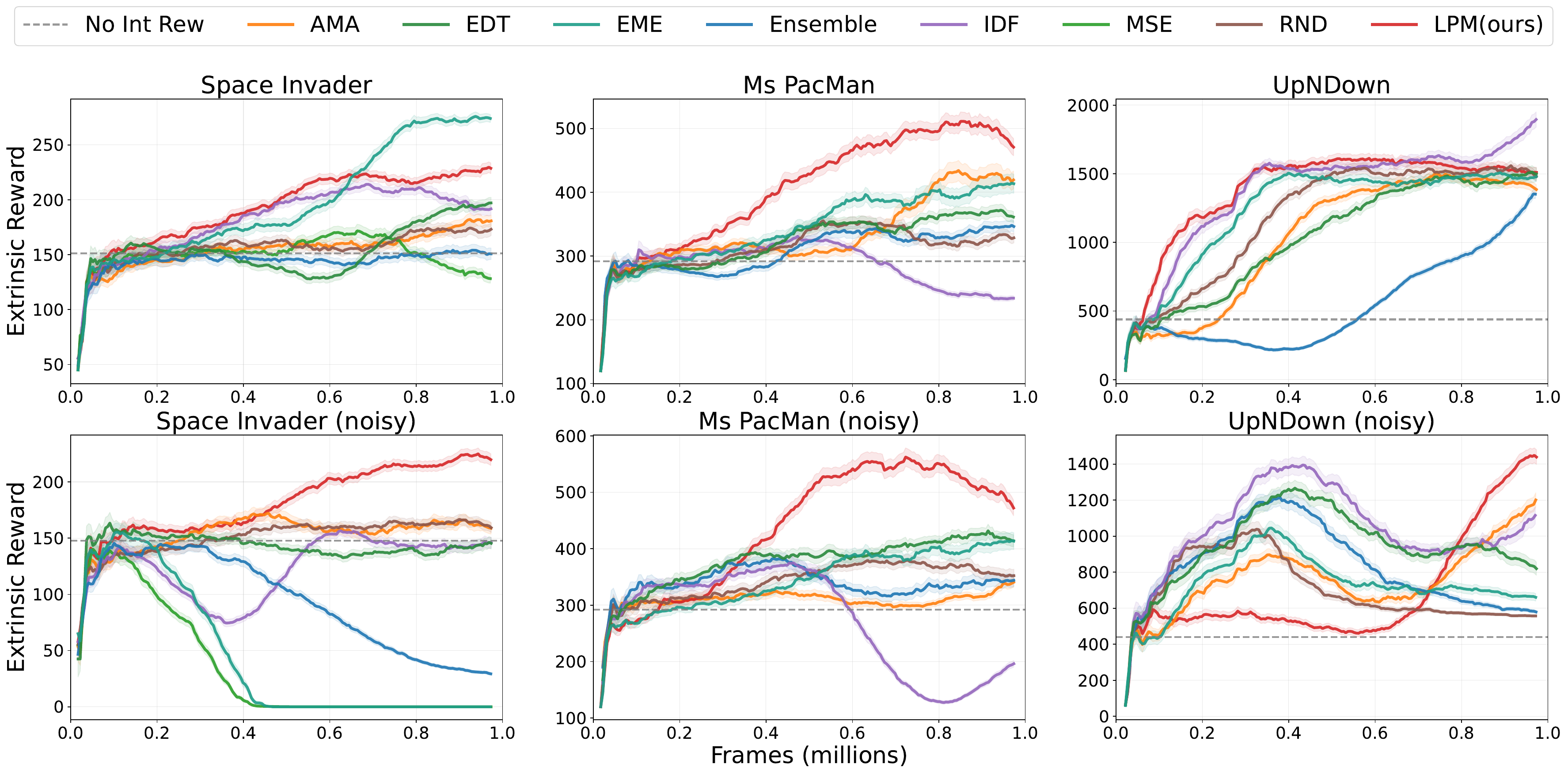}
\label{fig:learning_curves}
\vspace{-1.7em}
\caption{Average and standard deviation extrinsic rewards achieved by each method across deterministic and stochastic Atari environments with 128 random seeds.}
\vspace{-5ex}
\label{fig:atari_results}
\end{figure*}

\section{Related Work}
\subsection{Curiosity-driven exploration}
Curiosity-driven exploration methods use the prediction error of a learned dynamics model as intrinsic reward, encouraging the agent to visit states where its predictions are inaccurate \citep{schmidhuber1991possibility, burda2018large}. 
However, this approach can fail catastrophically in stochastic environments, where prediction errors may be dominated by uncontrollable randomness rather than meaningful novelty.
To mitigate this issue, the inverse dynamics feature (IDF) curiosity model \citep{pathak2017curiosity} computes prediction errors only in a feature space that captures aspects of the environment controllable by the agent, reducing sensitivity to irrelevant stochasticity. 
Random Network Distillation (RND) \citep{burda2018exploration} instead defines intrinsic rewards using the prediction error of a network trained to match the output of a fixed, randomly initialized target network. 
Other approaches use ensemble methods, where intrinsic rewards are based on the disagreement between predictions from multiple dynamics models \citep{pathak2019self}. 
Aleatoric Mapping Agent (AMA) \citep{mavor2022stay} explicitly filter out aleatoric uncertainty. 
While these methods have made important progress, sample efficiency still remains as a major challenge.
Accurately disentangling epistemic uncertainty from aleatoric uncertainty generally requires extensive data collection \citep{an2025disentangling}. 
During the early stages of training, these approaches often provide unreliable intrinsic rewards dominated by noise, wasting samples and slowing the agent's ability to explore meaningfully in complex environments.

\subsection{Exploration through Episodic Bonus}
Episodic bonus methods provide intrinsic motivation by rewarding state novelty within the current episode \citep{zhang2021noveld, henaff2023study}. 
These approaches generally fall into two categories: \textit{Count-based methods} \citep{raileanu2020ride, flet2021adversarially, mu2022improving, zhang2021noveld, kapturowski2024unlocking} generate a positive intrinsic reward signal whenever the agent visits a previously unseen state within an episode, thereby encouraging exploration of novel states.  
In contrast, \textit{similarity-based methods} \citep{savinov2018episodic, wang2024rethinking, jiang2025episodic, yang2024exploration} define novelty as a continuous function of the distance between the current state and past visited states, using metrics such as temporal distance \citep{jiang2025episodic}, Kullback–Leibler divergence \citep{wang2024rethinking}, or Euclidean distance \citep{henaff2022exploration}.

While count-based approaches are simple and avoid repeatedly rewarding duplicate visits, they are computation inefficient in high-dimensional stochastic environments.
Similarity-based episodic bonuses reduce some of these memory requirements by generalizing across states via distance metrics.
However, this depends on collecting a sufficiently diverse episodic memory before they can reliably escape stochasticity.  
During early exploration, when memory is sparse, these methods tend to assign high intrinsic rewards to stochastic or uninformative states, wasting large numbers of samples and slowing the discovery of meaningful, controllable transitions in complex environments.

\subsection{Information-Theoretic Exploration}
Information-theoretic exploration methods reward \textit{information gain}, encouraging the agent to take actions that maximally reduce its uncertainty about the environment dynamics \citep{houthooft2016vime, rhinehart2021information}. 
This is typically formalized as information gain (IG), the KL divergence between the agent's posterior and prior beliefs upon receiving a new observation. 
By directly maximizing IG, these approaches have strong theoretical guarantees that the collected data will be maximally informative for learning state transitions. 
In practice, estimating information gain requires maintaining a posterior distribution over the environment, which is typically achieved using Gaussian processes (GPs) \citep{bottero2022information, sui2015safe, hennig2012entropy, hernandez2014predictive} or Bayesian neural networks (BNNs) \citep{houthooft2016vime, mazzaglia2022curiosity, blau2019bayesian, li2021bayesian}. 
While theoretically appealing, these methods are computationally expensive and hard to scale to high-dimensional, visual domains such as Atari or robotics.

\begin{figure*}[t!]
\centering
\includegraphics[width=1\textwidth]{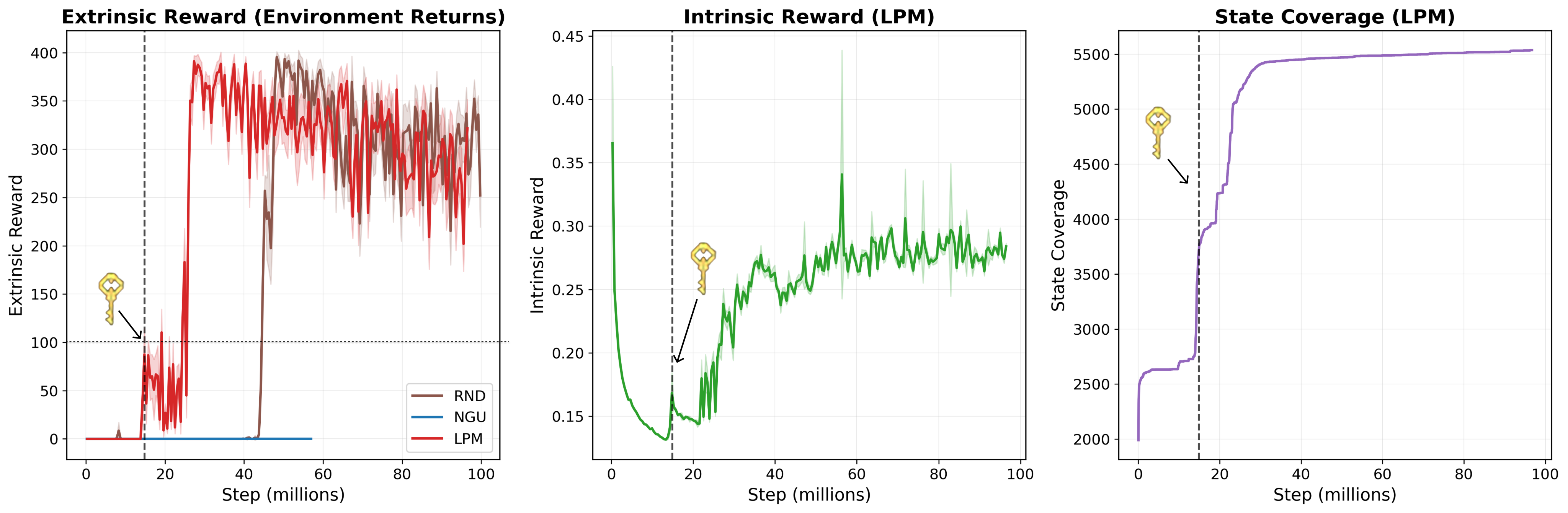} 
\vspace{-1.7em}
\caption{Exploration performance on Montezuma’s Revenge. LPM enables the agent to achieve meaningful extrinsic reward within 20M steps, whereas baselines require 40M+ steps to reach comparable scores.
\textbf{Center:} LPM’s intrinsic reward gradually decays as exploration saturates, mirroring its behavior on Noisy MNIST. Once the agent enters a previously unseen region around 20M steps, the intrinsic reward spikes, guiding the agent toward high-reward states.
\textbf{Right:} LPM steadily expands its state coverage and discovers novel states demonstrating its ability to drive exploration even in extremely sparse-reward environments.\color{black}}
\vspace{-2em}
\label{montezuma}
\end{figure*}
\section{Conclusion}
Motivated by recent neuroscience findings \citep{ten2021humans}, we propose using learning progress as intrinsic reward to motivate reinforcement learning exploration. We designed the Learning Progress Monitoring (LPM) method with a dual network structure to monitor learning progress. We then thoroughly tested our method in environments with and without extrinsic rewards, state-based noise, and action-based noise. Our results demonstrate that learning progress as intrinsic reward, compared to existing intrinsic motivation methods that use prediction error or distribution similarity, achieves higher exploration efficiency while being naturally robust to noise. This makes our LPM method particularly suitable for realistic environments where deterministic and stochastic elements coexist.
\section*{Ethics Statement}
All authors have read and adhere to the ICLR Code of Ethics.
\section*{Reproducibility Statement}
Detailed experimental setups, hyperparameters, architectural specifications, and links to our source code are documented in Appendix \ref{appendix: repro}.
\section*{Acknowledgment}
This work was supported in part by the NSF Grant \#2239458 and an UC Merced Fall 2023 Climate Action Seed Competition Grant. Any opinions, findings, and conclusions expressed in this material are those of the authors and do not necessarily reflect the views of the funding agencies.

\bibliography{iclr2026_conference}

\begin{thebibliography}{39}
\providecommand{\natexlab}[1]{#1}
\providecommand{\url}[1]{\texttt{#1}}
\expandafter\ifx\csname urlstyle\endcsname\relax
  \providecommand{\doi}[1]{doi: #1}\else
  \providecommand{\doi}{doi: \begingroup \urlstyle{rm}\Url}\fi

\bibitem[An et~al.(2025)An, Hou, and Du]{an2025disentangling}
Zhiyu An, Zhibo Hou, and Wan Du.
\newblock Disentangling uncertainties by learning compressed data representation.
\newblock \emph{arXiv preprint arXiv:2503.15801}, 2025.

\bibitem[Badia et~al.(2020)Badia, Sprechmann, Vitvitskyi, Guo, Piot, Kapturowski, Tieleman, Arjovsky, Pritzel, Bolt, et~al.]{badia2020never}
Adri{\`a}~Puigdom{\`e}nech Badia, Pablo Sprechmann, Alex Vitvitskyi, Daniel Guo, Bilal Piot, Steven Kapturowski, Olivier Tieleman, Mart{\'\i}n Arjovsky, Alexander Pritzel, Andew Bolt, et~al.
\newblock Never give up: Learning directed exploration strategies.
\newblock \emph{arXiv preprint arXiv:2002.06038}, 2020.

\bibitem[Bellemare et~al.(2016)Bellemare, Srinivasan, Ostrovski, Schaul, Saxton, and Munos]{bellemare2016unifying}
Marc Bellemare, Sriram Srinivasan, Georg Ostrovski, Tom Schaul, David Saxton, and Remi Munos.
\newblock Unifying count-based exploration and intrinsic motivation.
\newblock \emph{Advances in neural information processing systems}, 29, 2016.

\bibitem[Blau et~al.(2019)Blau, Ott, and Ramos]{blau2019bayesian}
Tom Blau, Lionel Ott, and Fabio Ramos.
\newblock Bayesian curiosity for efficient exploration in reinforcement learning.
\newblock \emph{arXiv preprint arXiv:1911.08701}, 2019.

\bibitem[Bottero et~al.(2022)Bottero, Luis, Vinogradska, Berkenkamp, and Peters]{bottero2022information}
Alessandro Bottero, Carlos Luis, Julia Vinogradska, Felix Berkenkamp, and Jan~R Peters.
\newblock Information-theoretic safe exploration with gaussian processes.
\newblock \emph{Advances in Neural Information Processing Systems}, 35:\penalty0 30707--30719, 2022.

\bibitem[Burda et~al.(2018)Burda, Edwards, Storkey, and Klimov]{burda2018exploration}
Yuri Burda, Harrison Edwards, Amos Storkey, and Oleg Klimov.
\newblock Exploration by random network distillation.
\newblock \emph{arXiv preprint arXiv:1810.12894}, 2018.

\bibitem[Burda et~al.(2019)Burda, Edwards, Pathak, Storkey, Darrell, and Efros]{burda2018large}
Yuri Burda, Harri Edwards, Deepak Pathak, Amos Storkey, Trevor Darrell, and Alexei~A Efros.
\newblock Large-scale study of curiosity-driven learning.
\newblock \emph{arXiv preprint arXiv:1808.04355}, 2019.

\bibitem[Chevalier-Boisvert et~al.(2023)Chevalier-Boisvert, Dai, Towers, de~Lazcano, Willems, Lahlou, Pal, Castro, and Terry]{MinigridMiniworld23}
Maxime Chevalier-Boisvert, Bolun Dai, Mark Towers, Rodrigo de~Lazcano, Lucas Willems, Salem Lahlou, Suman Pal, Pablo~Samuel Castro, and Jordan Terry.
\newblock Minigrid \& miniworld: Modular \& customizable reinforcement learning environments for goal-oriented tasks.
\newblock \emph{CoRR}, abs/2306.13831, 2023.

\bibitem[Deng(2012)]{deng2012mnist}
Li~Deng.
\newblock The mnist database of handwritten digit images for machine learning research [best of the web].
\newblock \emph{IEEE signal processing magazine}, 29\penalty0 (6):\penalty0 141--142, 2012.

\bibitem[Flet-Berliac et~al.(2021)Flet-Berliac, Ferret, Pietquin, Preux, and Geist]{flet2021adversarially}
Yannis Flet-Berliac, Johan Ferret, Olivier Pietquin, Philippe Preux, and Matthieu Geist.
\newblock Adversarially guided actor-critic.
\newblock \emph{arXiv preprint arXiv:2102.04376}, 2021.

\bibitem[Henaff et~al.(2022)Henaff, Raileanu, Jiang, and Rockt{\"a}schel]{henaff2022exploration}
Mikael Henaff, Roberta Raileanu, Minqi Jiang, and Tim Rockt{\"a}schel.
\newblock Exploration via elliptical episodic bonuses.
\newblock \emph{Advances in Neural Information Processing Systems}, 35:\penalty0 37631--37646, 2022.

\bibitem[Henaff et~al.(2023)Henaff, Jiang, and Raileanu]{henaff2023study}
Mikael Henaff, Minqi Jiang, and Roberta Raileanu.
\newblock A study of global and episodic bonuses for exploration in contextual mdps.
\newblock In \emph{International Conference on Machine Learning}, pp.\  12972--12999. PMLR, 2023.

\bibitem[Hennig \& Schuler(2012)Hennig and Schuler]{hennig2012entropy}
Philipp Hennig and Christian~J Schuler.
\newblock Entropy search for information-efficient global optimization.
\newblock \emph{The Journal of Machine Learning Research}, 13\penalty0 (1):\penalty0 1809--1837, 2012.

\bibitem[Hern{\'a}ndez-Lobato et~al.(2014)Hern{\'a}ndez-Lobato, Hoffman, and Ghahramani]{hernandez2014predictive}
Jos{\'e}~M Hern{\'a}ndez-Lobato, Matthew~W Hoffman, and Zoubin Ghahramani.
\newblock Predictive entropy search for efficient global optimization of black-box functions.
\newblock \emph{Advances in neural information processing systems}, 27, 2014.

\bibitem[Houthooft et~al.(2016)Houthooft, Chen, Duan, Schulman, De~Turck, and Abbeel]{houthooft2016vime}
Rein Houthooft, Xi~Chen, Yan Duan, John Schulman, Filip De~Turck, and Pieter Abbeel.
\newblock Vime: Variational information maximizing exploration.
\newblock \emph{Advances in neural information processing systems}, 29, 2016.

\bibitem[Jiang et~al.(2025)Jiang, Liu, Yang, Ma, Zhong, Hu, Yang, Liang, Xu, Zhang, et~al.]{jiang2025episodic}
Yuhua Jiang, Qihan Liu, Yiqin Yang, Xiaoteng Ma, Dianyu Zhong, Hao Hu, Jun Yang, Bin Liang, Bo~Xu, Chongjie Zhang, et~al.
\newblock Episodic novelty through temporal distance.
\newblock \emph{arXiv preprint arXiv:2501.15418}, 2025.

\bibitem[Kapturowski et~al.(2024)Kapturowski, Saade, Calandriello, Blundell, Sprechmann, Sarra, Groth, Valko, and Piot]{kapturowski2024unlocking}
Steven Kapturowski, Alaa Saade, Daniele Calandriello, Charles Blundell, Pablo Sprechmann, Leopoldo Sarra, Oliver Groth, Michal Valko, and Bilal Piot.
\newblock Unlocking the power of representations in long-term novelty-based exploration.
\newblock In \emph{Second Agent Learning in Open-Endedness Workshop}, 2024.

\bibitem[Krizhevsky et~al.(2009)Krizhevsky, Nair, and Hinton]{cifar}
Alex Krizhevsky, Vinod Nair, and Geoffrey Hinton.
\newblock Cifar-10 (canadian institute for advanced research).
\newblock 2009.
\newblock URL \url{http://www.cs.toronto.edu/~kriz/cifar.html}.

\bibitem[Ladosz et~al.(2022)Ladosz, Weng, Kim, and Oh]{ladosz2022exploration}
Pawel Ladosz, Lilian Weng, Minwoo Kim, and Hyondong Oh.
\newblock Exploration in deep reinforcement learning: A survey.
\newblock \emph{Information Fusion}, 85:\penalty0 1--22, 2022.

\bibitem[Li \& Faisal(2021)Li and Faisal]{li2021bayesian}
Luchen Li and A~Aldo Faisal.
\newblock Bayesian distributional policy gradients.
\newblock In \emph{Proceedings of the AAAI Conference on Artificial Intelligence}, volume~35, pp.\  8429--8437, 2021.

\bibitem[Mavor-Parker et~al.(2022)Mavor-Parker, Young, Barry, and Griffin]{mavor2022stay}
Augustine Mavor-Parker, Kimberly Young, Caswell Barry, and Lewis Griffin.
\newblock How to stay curious while avoiding noisy tvs using aleatoric uncertainty estimation.
\newblock In \emph{International Conference on Machine Learning}, pp.\  15220--15240. PMLR, 2022.

\bibitem[Mazzaglia et~al.(2022)Mazzaglia, Catal, Verbelen, and Dhoedt]{mazzaglia2022curiosity}
Pietro Mazzaglia, Ozan Catal, Tim Verbelen, and Bart Dhoedt.
\newblock Curiosity-driven exploration via latent bayesian surprise.
\newblock In \emph{Proceedings of the AAAI conference on artificial intelligence}, volume~36, pp.\  7752--7760, 2022.

\bibitem[Mu et~al.(2022)Mu, Zhong, Raileanu, Jiang, Goodman, Rockt{\"a}schel, and Grefenstette]{mu2022improving}
Jesse Mu, Victor Zhong, Roberta Raileanu, Minqi Jiang, Noah Goodman, Tim Rockt{\"a}schel, and Edward Grefenstette.
\newblock Improving intrinsic exploration with language abstractions.
\newblock \emph{Advances in Neural Information Processing Systems}, 35:\penalty0 33947--33960, 2022.

\bibitem[Pathak et~al.(2017)Pathak, Agrawal, Efros, and Darrell]{pathak2017curiosity}
Deepak Pathak, Pulkit Agrawal, Alexei~A Efros, and Trevor Darrell.
\newblock Curiosity-driven exploration by self-supervised prediction.
\newblock In \emph{International conference on machine learning}, pp.\  2778--2787. PMLR, 2017.

\bibitem[Pathak et~al.(2019)Pathak, Gandhi, and Gupta]{pathak2019self}
Deepak Pathak, Dhiraj Gandhi, and Abhinav Gupta.
\newblock Self-supervised exploration via disagreement.
\newblock In \emph{International conference on machine learning}, pp.\  5062--5071. PMLR, 2019.

\bibitem[Raileanu \& Rockt{\"a}schel(2020)Raileanu and Rockt{\"a}schel]{raileanu2020ride}
Roberta Raileanu and Tim Rockt{\"a}schel.
\newblock Ride: Rewarding impact-driven exploration for procedurally-generated environments.
\newblock \emph{arXiv preprint arXiv:2002.12292}, 2020.

\bibitem[Randl{\o}v \& Alstr{\o}m(1998)Randl{\o}v and Alstr{\o}m]{randlov1998learning}
Jette Randl{\o}v and Preben Alstr{\o}m.
\newblock Learning to drive a bicycle using reinforcement learning and shaping.
\newblock In \emph{ICML}, volume~98, pp.\  463--471, 1998.

\bibitem[Rhinehart et~al.(2021)Rhinehart, Wang, Berseth, Co-Reyes, Hafner, Finn, and Levine]{rhinehart2021information}
Nicholas Rhinehart, Jenny Wang, Glen Berseth, John Co-Reyes, Danijar Hafner, Chelsea Finn, and Sergey Levine.
\newblock Information is power: Intrinsic control via information capture.
\newblock \emph{Advances in Neural Information Processing Systems}, 34:\penalty0 10745--10758, 2021.

\bibitem[Ryan \& Deci(2000)Ryan and Deci]{ryan2000intrinsic}
Richard~M Ryan and Edward~L Deci.
\newblock Intrinsic and extrinsic motivations: Classic definitions and new directions.
\newblock \emph{Contemporary educational psychology}, 25\penalty0 (1):\penalty0 54--67, 2000.

\bibitem[Savinov et~al.(2018)Savinov, Raichuk, Marinier, Vincent, Pollefeys, Lillicrap, and Gelly]{savinov2018episodic}
Nikolay Savinov, Anton Raichuk, Rapha{\"e}l Marinier, Damien Vincent, Marc Pollefeys, Timothy Lillicrap, and Sylvain Gelly.
\newblock Episodic curiosity through reachability.
\newblock \emph{arXiv preprint arXiv:1810.02274}, 2018.

\bibitem[Schmidhuber(1991)]{schmidhuber1991possibility}
J{\"u}rgen Schmidhuber.
\newblock A possibility for implementing curiosity and boredom in model-building neural controllers.
\newblock In \emph{Proc. of the international conference on simulation of adaptive behavior: From animals to animats}, pp.\  222--227, 1991.

\bibitem[Schulman et~al.(2017)Schulman, Wolski, Dhariwal, Radford, and Klimov]{schulman2017proximal}
John Schulman, Filip Wolski, Prafulla Dhariwal, Alec Radford, and Oleg Klimov.
\newblock Proximal policy optimization algorithms.
\newblock \emph{arXiv preprint arXiv:1707.06347}, 2017.

\bibitem[Sui et~al.(2015)Sui, Gotovos, Burdick, and Krause]{sui2015safe}
Yanan Sui, Alkis Gotovos, Joel Burdick, and Andreas Krause.
\newblock Safe exploration for optimization with gaussian processes.
\newblock In \emph{International conference on machine learning}, pp.\  997--1005. PMLR, 2015.

\bibitem[Ten et~al.(2021)Ten, Kaushik, Oudeyer, and Gottlieb]{ten2021humans}
Alexandr Ten, Pramod Kaushik, Pierre-Yves Oudeyer, and Jacqueline Gottlieb.
\newblock Humans monitor learning progress in curiosity-driven exploration.
\newblock \emph{Nature communications}, 12\penalty0 (1):\penalty0 5972, 2021.

\bibitem[Thrun(1992)]{thrun1992efficient}
Sebastian~B Thrun.
\newblock \emph{Efficient exploration in reinforcement learning}.
\newblock Carnegie Mellon University, 1992.

\bibitem[Wang et~al.(2023)Wang, Yang, Dong, Sun, Liu, et~al.]{wang2023efficient}
Yiming Wang, Ming Yang, Renzhi Dong, Binbin Sun, Furui Liu, et~al.
\newblock Efficient potential-based exploration in reinforcement learning using inverse dynamic bisimulation metric.
\newblock \emph{Advances in Neural Information Processing Systems}, 36:\penalty0 38786--38797, 2023.

\bibitem[Wang et~al.(2024)Wang, Zhao, Liu, et~al.]{wang2024rethinking}
Yiming Wang, Kaiyan Zhao, Furui Liu, et~al.
\newblock Rethinking exploration in reinforcement learning with effective metric-based exploration bonus.
\newblock \emph{Advances in Neural Information Processing Systems}, 37:\penalty0 57765--57792, 2024.

\bibitem[Yang et~al.(2024)Yang, Tao, Lyu, and Li]{yang2024exploration}
Kai Yang, Jian Tao, Jiafei Lyu, and Xiu Li.
\newblock Exploration and anti-exploration with distributional random network distillation.
\newblock \emph{arXiv preprint arXiv:2401.09750}, 2024.

\bibitem[Zhang et~al.(2021)Zhang, Xu, Wang, Wu, Keutzer, Gonzalez, and Tian]{zhang2021noveld}
Tianjun Zhang, Huazhe Xu, Xiaolong Wang, Yi~Wu, Kurt Keutzer, Joseph~E Gonzalez, and Yuandong Tian.
\newblock Noveld: A simple yet effective exploration criterion.
\newblock \emph{Advances in Neural Information Processing Systems}, 34:\penalty0 25217--25230, 2021.

\end{thebibliography}
\bibliographystyle{iclr2026_conference}

\appendix

\section{Proofs}

\subsection{Derivation of Information Gain (IG) to Kullback--Leibler (KL) divergence}\label{appendix: derivation of IG to KL}
Here we show that the Definition \ref{definition: IG} of IG is equivalent to the Kullback--Leibler (KL) divergence from the posterior to the prior:
\begin{align}
\mathrm{IG} 
&= \mathrm{KL}\!\left(p(\theta \mid D)\,\|\,p(\theta)\right) \\[6pt]
&= \int p(\theta \mid D) \, \log \frac{p(\theta \mid D)}{p(\theta)} \, d\theta \\[6pt]
&= \int p(\theta \mid D) \Big[ \log p(\theta \mid D) - \log p(\theta) \Big] \, d\theta \\[6pt]
&= \int p(\theta \mid D) \Big[ \log p(D \mid \theta) + \log p(\theta) - \log p(D) - \log p(\theta) \Big] \, d\theta \\[6pt]
&= \int p(\theta \mid D) \, \log p(D \mid \theta) \, d\theta \;-\; \log p(D) \\[6pt]
&= \mathbb{E}_{p(\theta \mid D)} \big[ \log p(D \mid \theta) \big] \;-\; \log p(D).
\end{align}

\subsection{MLE satisfies the $\theta_D$ condition}
\begin{lemma}\label{lemma: mle-satisfies}
Let $\theta_{\mathrm{MLE}}=\arg\max_\theta \log p(D\mid\theta)$. Then
\[
\log p(D\mid\theta_{\mathrm{MLE}})\;\ge\;\mathbb{E}_{p(\theta\mid D)}[\log p(D\mid\theta)].
\]
\end{lemma}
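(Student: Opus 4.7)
The plan is to observe that this lemma is essentially an immediate consequence of the defining property of the maximum likelihood estimator: $\theta_{\mathrm{MLE}}$ is a global maximizer of the function $\theta \mapsto \log p(D\mid\theta)$, so the value $\log p(D\mid\theta_{\mathrm{MLE}})$ is an upper bound on $\log p(D\mid\theta)$ pointwise in $\theta$. Any quantity dominated pointwise by a constant is also dominated by that constant in expectation, regardless of the measure used to take the expectation, so the posterior expectation $\mathbb{E}_{p(\theta\mid D)}[\log p(D\mid\theta)]$ cannot exceed $\log p(D\mid\theta_{\mathrm{MLE}})$.

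Concretely, the steps I would carry out are: (i) state the defining inequality $\log p(D\mid\theta_{\mathrm{MLE}}) \ge \log p(D\mid\theta)$ for every $\theta \in \Theta$, which holds by definition of the MLE as the argmax of the log-likelihood; (ii) apply the monotonicity of expectation with respect to the posterior distribution $p(\theta\mid D)$, giving $\mathbb{E}_{p(\theta\mid D)}[\log p(D\mid\theta_{\mathrm{MLE}})] \ge \mathbb{E}_{p(\theta\mid D)}[\log p(D\mid\theta)]$; (iii) note that the left-hand side is a constant in $\theta$ and therefore equals $\log p(D\mid\theta_{\mathrm{MLE}})$, which yields the claimed inequality.

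There is essentially no obstacle here. The only subtleties worth flagging in the write-up are (a) existence of the argmax, which one can sidestep either by assuming the likelihood attains its supremum over $\Theta$ or by replacing $\theta_{\mathrm{MLE}}$ with any approximate maximizer and passing to the limit, and (b) ensuring the expectation is well-defined, which follows because $\log p(D\mid\theta)$ is bounded above by $\log p(D\mid\theta_{\mathrm{MLE}})$ so its posterior expectation exists in $[-\infty, \log p(D\mid\theta_{\mathrm{MLE}})]$. With those remarks, the proof occupies only two or three lines and directly justifies using the MLE as the reference point $\theta_D$ in Theorems~\ref{theorem: monotonicity and zero equivariance} and~\ref{theorem: necessity of g}.
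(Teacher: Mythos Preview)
Your proposal is correct and matches the paper's approach exactly: the paper's one-line proof simply notes that ``the expectation is an average of log-likelihood values, which cannot exceed their maximum,'' which is precisely your steps (i)--(iii) compressed. Your additional remarks on existence of the argmax and well-definedness of the expectation are reasonable caveats but go beyond what the paper records.
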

\begin{proof}
The expectation is an average of log-likelihood values, which cannot exceed their maximum. 
\end{proof}

\subsection{Proof of Theorem \ref{theorem: monotonicity and zero equivariance}}\label{appendix: proof of theoremt mono}

Reiterating the theorem:

\begin{theorem}[Monotonicity and Zero Equivalence]
Let $\theta \in \Theta$ be model parameters with prior $p(\theta)$ and posterior $p(\theta \mid D)$ given dataset $D$. Assume the likelihood depends on $\theta$ only through a positive scalar function $\mathrm{MSE}(\theta)$ such that $\log p(D\mid \theta) = -c \log \mathrm{MSE}(\theta) + \mathrm{const}(D), c>0$.
For intrinsic reward $r^i := \mathbb{E}_{p(\theta)}[\log \mathrm{MSE}(\theta)] - \log \mathrm{MSE}(\theta_D)$
where $\theta_D$ is a chosen point in $\Theta$ satisfying $\log p(D\mid \theta_D) \ge \mathbb{E}_{p(\theta\mid D)}[\log p(D\mid \theta)]$,
Then the following hold:
\begin{enumerate}
\itemsep -.5em 
\vspace{-.5em}
    \item $r_i \ge \frac{1}{c} \, \mathrm{IG}$, where $\mathrm{IG}$ is defined by Definition \ref{definition: IG}.
    \item $\mathrm{IG} = 0 \implies r_i = 0$.
    \item $r_i = 0 \implies \mathrm{IG} = 0$ under the identifiability condition that the likelihood is non-constant and injective in $\mathrm{MSE}(\theta)$.
\end{enumerate}
\end{theorem}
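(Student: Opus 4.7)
The plan is to reduce the three claims to inequalities on $\log p(D \mid \theta)$ via the likelihood assumption, then apply Jensen's inequality together with the defining property of $\theta_D$. First I would translate $r^i$ into likelihood language: substituting $\log \mathrm{MSE}(\theta) = -\tfrac{1}{c}\bigl(\log p(D\mid\theta) - \mathrm{const}(D)\bigr)$ into the definition of $r^i$ and noting that the constant cancels between the two terms yields the clean identity
\[
c \cdot r^i \;=\; \log p(D \mid \theta_D) \;-\; \mathbb{E}_{p(\theta)}[\log p(D \mid \theta)].
\]
Everything downstream is a manipulation of this identity.

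For Part 1, I would apply Jensen's inequality to the concave function $\log$ to obtain $\log p(D) = \log \mathbb{E}_{p(\theta)}[p(D\mid\theta)] \ge \mathbb{E}_{p(\theta)}[\log p(D \mid \theta)]$, and chain it with the hypothesis $\log p(D \mid \theta_D) \ge \mathbb{E}_{p(\theta \mid D)}[\log p(D \mid \theta)]$:
\[
c \cdot r^i \;\ge\; \log p(D\mid\theta_D) - \log p(D) \;\ge\; \mathbb{E}_{p(\theta\mid D)}[\log p(D\mid\theta)] - \log p(D) \;=\; \mathrm{IG},
\]
where the final equality is the IG-to-KL derivation in Appendix \ref{appendix: derivation of IG to KL}. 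For Part 2, I would note that $\mathrm{IG} = 0$ is equivalent to $p(\theta\mid D) = p(\theta)$ prior-almost-everywhere; by Bayes' rule this forces $p(D\mid\theta) = p(D)$ on the support of the prior, so $\mathrm{MSE}(\theta)$ is constant there, both terms in $r^i$ collapse to the same value, and $r^i = 0$.

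For Part 3, I would invoke the sandwich $0 = c\cdot r^i \ge \mathrm{IG} \ge 0$ established in Part 1 to conclude $\mathrm{IG} = 0$ immediately; the identifiability condition then ensures this implication is non-vacuous by ruling out degenerate situations in which the prior concentrates on a level set of $\mathrm{MSE}$ and the likelihood is effectively constant in $\theta$ for unrelated reasons. The main obstacle I anticipate is Part 3: confirming that identifiability (non-constant and injective likelihood in $\mathrm{MSE}$) is exactly what is needed to tighten the sandwich into a "sharp" statement, and checking that the freedom in choosing $\theta_D$ (an arbitrary point satisfying a one-sided inequality, not necessarily the MLE) does not introduce a loophole when $r^i = 0$. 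Parts 1 and 2 should be routine, amounting to one application each of Jensen's inequality and Bayes' rule with careful bookkeeping of the constants.
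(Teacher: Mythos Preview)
Your argument for Parts 1 and 2 is essentially identical to the paper's: translate $r^i$ into likelihood language, apply Jensen to bound $\mathbb{E}_{p(\theta)}[\log p(D\mid\theta)]$ by $\log p(D)$, invoke the $\theta_D$ hypothesis, and read off the IG identity; then for Part 2 use $\mathrm{IG}=0 \Rightarrow p(\theta\mid D)=p(\theta)$ to force $\mathrm{MSE}$ constant on the prior's support.

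Your Part 3, however, differs from the paper's. The paper argues directly: from $r^i=0$ and identifiability it deduces $\log\mathrm{MSE}(\theta)$ is prior-a.e.\ constant, hence $p(\theta\mid D)=p(\theta)$ and $\mathrm{IG}=0$. Your sandwich $0=c\,r^i\ge \mathrm{IG}\ge 0$ (the right inequality being non-negativity of the KL divergence) is both shorter and stronger: it shows $r^i=0\Rightarrow \mathrm{IG}=0$ \emph{unconditionally}, with no appeal to identifiability. So your worry about whether identifiability is ``exactly what is needed'' is misplaced in the opposite direction --- it is not needed at all for the implication, and your own argument already establishes this. The freedom in choosing $\theta_D$ also causes no trouble: Part 1 holds for any $\theta_D$ satisfying the stated inequality, so the sandwich does too. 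In short, your route for Part 3 is cleaner than the paper's and exposes that the identifiability hypothesis in the theorem statement is superfluous.
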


\begin{proof}
From the likelihood assumption,
\[
\log \mathrm{MSE}(\theta) = -\frac{1}{c} \big(\log p(D\mid \theta) - \mathrm{const}(D)\big).
\]
Hence
\[
r_i = \mathbb{E}_{p(\theta)}[\log \mathrm{MSE}(\theta)] - \log \mathrm{MSE}(\theta_D)
= \frac{1}{c} \Big( \log p(D\mid \theta_D) - \mathbb{E}_{p(\theta)}[\log p(D\mid \theta)] \Big).
\]

By Jensen's inequality,
\[
\log p(D) = \log \int p(D\mid \theta) p(\theta)\,d\theta \ge \mathbb{E}_{p(\theta)}[\log p(D\mid \theta)].
\]

Combined with the condition $\log p(D\mid \theta_D) \ge \mathbb{E}_{p(\theta\mid D)}[\log p(D\mid \theta)]$ and the information gain identity
\[
\mathrm{IG} = \mathbb{E}_{p(\theta\mid D)}[\log p(D\mid \theta)] - \log p(D),
\]
we obtain
\[
r_i \ge \frac{1}{c} \left( \mathbb{E}_{p(\theta\mid D)}[\log p(D\mid \theta)] - \log p(D) \right) = \frac{1}{c} \, \mathrm{IG}.
\]

If $\mathrm{IG}=0$, then $p(\theta\mid D)=p(\theta)$ almost surely, implying $\mathrm{MSE}(\theta)$ is constant over $\theta$. Hence $r_i=0$.  

Conversely, if $r_i=0$ and the likelihood is non-constant and injective in $\mathrm{MSE}(\theta)$, then $\log \mathrm{MSE}(\theta)$ is constant under $p(\theta)$, forcing $p(\theta\mid D)=p(\theta)$, so $\mathrm{IG}=0$.
\end{proof}

\begin{corollary}[Gaussian i.i.d.\ Case with MLE]
Let $D=\{(x_i,y_i)\}_{i=1}^n$ and assume a Gaussian i.i.d.\ residual model
\[
y_i = f(x_i;\theta) + \varepsilon_i, \qquad \varepsilon_i \sim \mathcal{N}(0, \mathrm{MSE}(\theta)),
\]
so that the log-likelihood is
\[
\log p(D\mid \theta) = -\frac{n}{2} \log \mathrm{MSE}(\theta) + \mathrm{const}(D).
\]
Let $\theta_D$ be the maximum likelihood estimate (MLE):
\[
\theta_D = \arg\max_\theta \log p(D\mid \theta).
\]
Then the intrinsic reward
\[
r_i := \mathbb{E}_{p(\theta)}[\log \mathrm{MSE}(\theta)] - \log \mathrm{MSE}(\theta_D)
\]
satisfies
\[
r_i \;\ge\; \frac{2}{n} \, \mathrm{IG}, \qquad \mathrm{IG} := \mathrm{KL}(p(\theta \mid D) \| p(\theta)).
\]
Moreover, $\mathrm{IG}=0 \implies r_i = 0$ and the converse holds under the identifiability condition that the likelihood is injective in $\mathrm{MSE}(\theta)$.
\end{corollary}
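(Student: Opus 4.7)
The plan is to present this as a straightforward instantiation of Theorem~\ref{theorem: monotonicity and zero equivariance}. The Gaussian i.i.d.\ log-likelihood is already written in the exact template $\log p(D\mid\theta) = -c\log \mathrm{MSE}(\theta) + \mathrm{const}(D)$ required by the master theorem, with $c = n/2$. So my first step is simply to match notation, record that $c = n/2 > 0$, and note that no additional structural assumption about the model or prior is needed beyond what the corollary already specifies.

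Next I would check the admissibility hypothesis on $\theta_D$, which the master theorem states as $\log p(D\mid\theta_D) \ge \mathbb{E}_{p(\theta\mid D)}[\log p(D\mid\theta)]$. Because the corollary fixes $\theta_D = \theta_{\mathrm{MLE}}$, this is precisely the content of Lemma~\ref{lemma: mle-satisfies}: the argmax of a real-valued function dominates any expectation of that function, regardless of the measure used. With $c$ identified and the MLE condition verified, part~(1) of Theorem~\ref{theorem: monotonicity and zero equivariance} yields $r_i \ge \frac{1}{c}\mathrm{IG} = \frac{2}{n}\mathrm{IG}$, which is the quantitative assertion of the corollary.

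For the zero-equivalence statements I would invoke parts~(2) and~(3) of the master theorem. The forward direction $\mathrm{IG}=0 \Rightarrow r_i=0$ transfers without modification. The converse requires the identifiability hypothesis, which asks that the likelihood be non-constant and injective in $\mathrm{MSE}(\theta)$; since the Gaussian form $\log p(D\mid\theta) = -\tfrac{n}{2}\log\mathrm{MSE}(\theta) + \mathrm{const}(D)$ is strictly monotone in $\mathrm{MSE}(\theta)$, injectivity is automatic, and non-constancy holds whenever $\mathrm{MSE}(\theta)$ actually varies across the support of $p(\theta)$---the only regime in which the claim has content.

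I do not anticipate a genuine analytic obstacle: the corollary is essentially a substitution $c \mapsto n/2$ into the master theorem. The one point worth flagging explicitly in the write-up is the role of $\mathrm{MSE}(\theta)$ in the Gaussian model (either as a $\theta$-dependent variance, or, as used elsewhere in the paper, as an empirical squared-residual functional of $\theta$); under either reading the stated log-likelihood identity holds up to a data-only additive term, which is precisely what the $\mathrm{const}(D)$ slot in the master theorem is designed to absorb. Beyond that bookkeeping, every bullet of the corollary follows by direct appeal to the already-proved theorem and lemma.
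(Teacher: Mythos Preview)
Your proposal is correct and follows essentially the same route as the paper: identify $c=n/2$, invoke Lemma~\ref{lemma: mle-satisfies} to verify the $\theta_D$ condition for the MLE, and then read off all three conclusions directly from Theorem~\ref{theorem: monotonicity and zero equivariance}. The paper's proof is terser (it does not spell out the zero-equivalence parts or the injectivity check), but your added detail is sound and the underlying argument is identical.
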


\begin{proof}
Immediate from Theorem~1 with $c = n/2$ and the MLE choice of $\theta_D$, which guarantees
\[
\log p(D\mid \theta_D) \ge \mathbb{E}_{p(\theta\mid D)}[\log p(D\mid \theta)].
\]
Substituting $c=n/2$ into the lower bound of Theorem~1 gives $r_i \ge \frac{2}{n}\mathrm{IG}$.
\end{proof}

\subsection{Proof of theorem \ref{theorem: necessity of g}} \label{appendix: proof of theorem necessity of g}

Reiterating the theorem:

\begin{theorem}[Necessity of Expectation in Intrinsic Reward for Monotonicity]
Let $\theta \in \Theta$ be model parameters with prior $p(\theta)$ and posterior $p(\theta\mid D)$ given dataset $D$, and assume $\log p(D \mid \theta) = -c \log \mathrm{MSE}(\theta) + \mathrm{const}(D), c>0$.
Define simple pointwise intrinsic reward
\[
r^{i, \text{point}} := \log \mathrm{MSE}(\theta) - \log \mathrm{MSE}(\theta_D)
\]
for a single parameter $\theta$, and the expectation-based intrinsic reward
\[
r^{i, \text{exp}} := \mathbb{E}_{p(\theta)}[\log \mathrm{MSE}(\theta)] - \log \mathrm{MSE}(\theta_D),
\]
where $\theta_D$ is a chosen point in $\Theta$ satisfying $\log p(D \mid \theta_D) \ge \mathbb{E}_{p(\theta\mid D)}[\log p(D\mid \theta)]$.

Then:
\begin{enumerate}
\itemsep -.5em 
\vspace{-.5em}
    \item $r^{i, \text{exp}} \ge \frac{1}{c}\,\mathrm{IG}$, where $\mathrm{IG} = \mathrm{KL}(p(\theta\mid D)\|p(\theta))$.
    \item There exist $\theta$ for which $r^{i, \text{point}} < 0$ while $\mathrm{IG} > 0$.
\end{enumerate}
Consequently, the expectation in the first term of $r_i$ is necessary to guarantee a deterministic monotone relationship between intrinsic reward and information gain.
\end{theorem}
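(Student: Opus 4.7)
The plan is to treat the two parts separately: Part~1 will follow almost for free from Theorem~\ref{theorem: monotonicity and zero equivariance}, while Part~2 will require constructing an explicit counterexample.

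For Part~1, I would first note that $r^{i,\text{exp}}$ is \emph{syntactically identical} to the reward $r^i$ in Theorem~\ref{theorem: monotonicity and zero equivariance}: both are defined as $\mathbb{E}_{p(\theta)}[\log \mathrm{MSE}(\theta)] - \log \mathrm{MSE}(\theta_D)$, and the hypothesis on $\theta_D$ is the same. The bound $r^{i,\text{exp}} \ge \tfrac{1}{c}\,\mathrm{IG}$ is therefore immediate by direct invocation of Theorem~\ref{theorem: monotonicity and zero equivariance}(1); no additional machinery is needed.

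For Part~2, I would exhibit a concrete Bayesian setting in which (i)~the data are informative so $\mathrm{IG} > 0$, (ii)~some valid $\theta_D$ is strictly suboptimal in log-likelihood, and (iii)~a particular choice of $\theta$ has strictly smaller $\log \mathrm{MSE}$ than $\theta_D$, forcing $r^{i,\text{point}} < 0$. A convenient instance is a Gaussian prior on $\theta \in \mathbb{R}$ paired with a quadratic log-MSE, so that $\log p(D\mid\theta) = -c \log \mathrm{MSE}(\theta) + \mathrm{const}(D)$ holds and the posterior is a non-degenerate Gaussian. Strict Jensen then yields $\mathbb{E}_{p(\theta\mid D)}[\log p(D\mid\theta)] < \max_{\theta'} \log p(D\mid\theta')$, which shows that the admissibility condition on $\theta_D$ is satisfied by an entire interval surrounding the MLE rather than only by the MLE itself. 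I would pick one such $\theta_D$ that is strictly away from the MLE and then set $\theta$ equal to the MLE; since the monotone relation $\log \mathrm{MSE}(\theta) = -(\log p(D\mid\theta) - \mathrm{const}(D))/c$ implies that the MLE uniquely minimizes $\log \mathrm{MSE}$, one obtains $\log \mathrm{MSE}(\theta) < \log \mathrm{MSE}(\theta_D)$, hence $r^{i,\text{point}} < 0$. Informativeness of the data (e.g.\ posterior mean shifted away from prior mean) guarantees $\mathrm{IG} = \mathrm{KL}(p(\theta\mid D)\,\|\,p(\theta)) > 0$, completing the counterexample.

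The main obstacle will be justifying step~(ii): showing that non-MLE points really can satisfy the condition $\log p(D\mid \theta_D) \ge \mathbb{E}_{p(\theta\mid D)}[\log p(D\mid \theta)]$. One might worry that in degenerate or sharply peaked regimes only the MLE qualifies, which would destroy the counterexample. I would handle this by explicitly restricting to continuous priors with non-degenerate posteriors, where strict Jensen produces a positive gap between the posterior-mean log-likelihood and the maximum log-likelihood, opening up a full neighborhood of valid $\theta_D$'s around the MLE. Once that neighborhood is shown to be nontrivial, the rest of the counterexample construction is routine, and the final conclusion -- that the expectation in the first term of $r^i$ is indispensable for monotone correspondence with IG -- follows immediately from the contrast between Parts~1 and 2.
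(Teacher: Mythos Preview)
Your proposal is correct and follows the paper's approach: Part~1 is exactly Theorem~\ref{theorem: monotonicity and zero equivariance}(1), and Part~2 in both cases amounts to exhibiting a $\theta$ with strictly higher likelihood than $\theta_D$, so that $r^{i,\text{point}}<0$ while $\mathrm{IG}>0$. Your handling of Part~2 is in fact more careful than the paper's brief argument---you explicitly justify via strict Jensen that non-MLE choices of $\theta_D$ satisfy the admissibility condition and then take $\theta$ equal to the MLE, whereas the paper simply asserts that such $\theta$ exist---but the underlying mechanism is identical.
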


\begin{proof}
Express each reward in terms of the log-likelihood:
\[
\log \mathrm{MSE}(\theta) = -\frac{1}{c} (\log p(D\mid \theta) - \mathrm{const}(D)),
\]
so that
\[
r^{i, \text{point}} = \frac{1}{c}\big(\log p(D\mid \theta_D) - \log p(D\mid \theta)\big), \quad
r^{i, \text{exp}} = \frac{1}{c}\big(\log p(D\mid \theta_D) - \mathbb{E}_{p(\theta)}[\log p(D\mid \theta)]\big).
\]

For $r^{i, \text{exp}}$, Jensen's inequality gives
\[
\log p(D) = \log \int p(D\mid \theta)p(\theta)d\theta \ge \mathbb{E}_{p(\theta)}[\log p(D\mid \theta)].
\]
Combined with $\log p(D\mid \theta_D) \ge \mathbb{E}_{p(\theta\mid D)}[\log p(D\mid \theta)]$, we get
\[
r^{i, \text{exp}} \ge \frac{1}{c}\big(\mathbb{E}_{p(\theta\mid D)}[\log p(D\mid \theta)] - \log p(D)\big) = \frac{1}{c}\,\mathrm{IG}.
\]

For $r^{i, \text{point}}$, however, consider any $\theta$ for which $\log p(D\mid \theta) > \log p(D\mid \theta_D)$ (possible because $\theta$ may be drawn from the prior or outside high-likelihood regions). Then
\[
r^{i, \text{point}} = \frac{1}{c} (\log p(D\mid \theta_D) - \log p(D\mid \theta)) < 0
\]
while $\mathrm{IG} = \mathrm{KL}(p(\theta\mid D)\|p(\theta)) \ge 0$. Therefore, monotonicity fails without the expectation. This demonstrates the necessity of using the expectation in the first term of $r^i$.
\end{proof}

\section{MiniGrid Experiment}
We further evaluate LPM within the MiniGrid\citep{MinigridMiniworld23} framework, using the Lava Crossing environment. This maze-like domain serves as an ideal testbed for intrinsic motivation: while agents must explore to reach the goal, excessive or unregulated curiosity can be detrimental, driving the agent into fatal lava grids. As illustrated in Figure \ref{lava}, both ICM and our proposed LPM demonstrate comparable performance, significantly outperforming the remaining baselines.

\begin{figure}[H]
\centering
\includegraphics[width=1\textwidth]{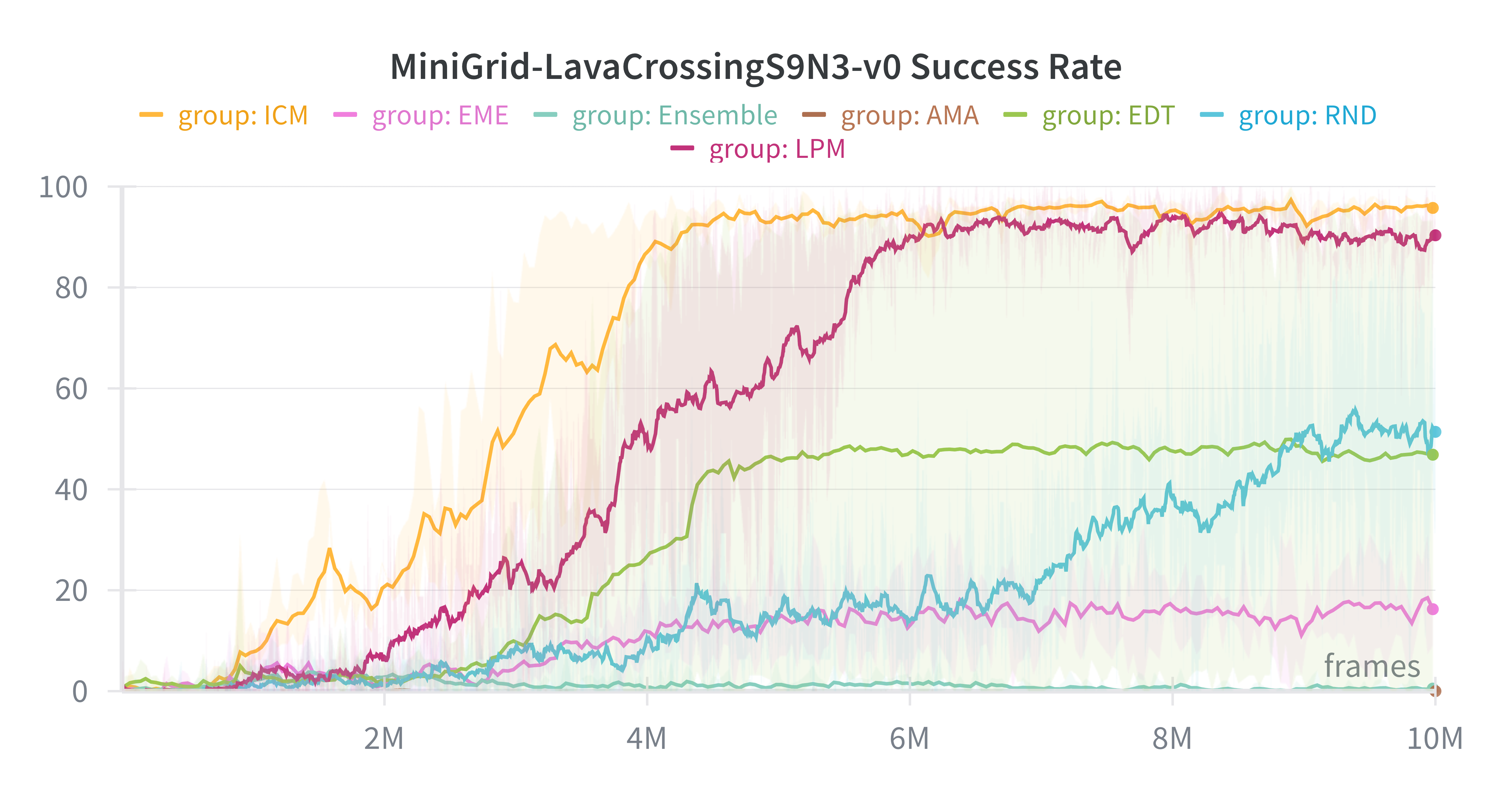} 
\caption{PPO Agent success rate across 5 random seeds.}
\label{lava}
\end{figure}

\section{Implementation Details} \label{appendix: repro}
This appendix provides comprehensive implementation details for all experiments to ensure full reproducibility of the reported results. We present the specific network architectures, hyperparameters, and training configurations used in each experimental setting.

\subsection{Noisy MNIST Implementation Details}

\textbf{Network Architectures}

\textbf{LPM Dynamic:}
\begin{itemize}
    \item Layer 1: Linear(784 $\rightarrow$ 784) $\rightarrow$ ReLU
    \item Layer 2: Linear(784 $\rightarrow$ 784) $\rightarrow$ ReLU  
    \item Layer 3: Linear(784 $\rightarrow$ 784) $\rightarrow$ ReLU
    \item Concatenation: Concat(input, features)
    \item Output Layer: Linear(1568 $\rightarrow$ 784)
\end{itemize}

\textbf{LPM Error Model:}
\begin{itemize}
    \item Layer 1: Linear(784 $\rightarrow$ 256) $\rightarrow$ ReLU
    \item Layer 2: Linear(256 $\rightarrow$ 128) $\rightarrow$ ReLU
    \item Layer 3: Linear(128 $\rightarrow$ 64) $\rightarrow$ ReLU
    \item Output Layer: Linear(64 $\rightarrow$ 1)
\end{itemize}

\textbf{Training Configuration}
\begin{itemize}
    \item Learning rate: 0.001
    \item Buffer size 100, update every 1 step
    \item Batch size: 32
    \item Optimizer: Adam
    \item Total training steps: 600
    \item Evaluation frequency: Every 10 steps
    \item Random seeds: 5 seeds for statistical significance
\end{itemize}

\subsection{Miniworld Implementation Details}

All methods use identical CNN architecture to ensure fair comparison:

\begin{itemize}
    \item \textbf{Conv1:} Conv2d(3 $\rightarrow$ 32, kernel=8, stride=4) $\rightarrow$ ReLU
    \item \textbf{Conv2:} Conv2d(32 $\rightarrow$ 64, kernel=4, stride=2) $\rightarrow$ ReLU
    \item \textbf{Conv3:} Conv2d(64 $\rightarrow$ 64, kernel=3, stride=1) $\rightarrow$ ReLU
    \item \textbf{Flatten:} Reshape to feature vector
\end{itemize}

\textbf{LPM-Specific Architecture}

\textbf{Dynamics Model ($f_\theta$):}
\begin{itemize}
    \item Input: CNN Features
    \item Layer 1: Linear(features+actions $\rightarrow$ 512) $\rightarrow$ ReLU
    \item Decoder: ConvTranspose Decoder $\rightarrow$ Sigmoid
\end{itemize}

\textbf{Error Model ($g_\phi$):}
\begin{itemize}
    \item Input: CNN Features
    \item Layer 1: Linear(features+actions $\rightarrow$ 256) $\rightarrow$ ReLU
    \item Layer 2: Linear(256 $\rightarrow$ 128) $\rightarrow$ ReLU
    \item Output Layer: Linear(128 $\rightarrow$ 1)
\end{itemize}

\textbf{Training Configuration}

\textbf{Shared RL Parameters (All Methods):}
\begin{itemize}
    \item Base Algorithm: A2C
    \item Policy Learning Rate: 0.01
    \item Discount Factor ($\gamma$): 0.99
    \item Intrinsic Weight ($\lambda$): 1
    \item Entropy Coefficient: 0.03
    \item Value Loss Coefficient: 0.5
    \item Gradient Clipping: 0.5
    \item Update Frequency: 64 steps
    \item Episodes per Method: 10
    \item Steps per Episode: 50,000
    \item Dynamics Model Learning Rate: 0.001
\end{itemize}

\textbf{LPM-Specific Parameters:}
\begin{itemize}
    \item Error Buffer Size: 100
\end{itemize}

\subsection{C.3 Atari Implementation Details}

\textbf{Shared CNN Feature Extractor}

All methods use the same CNN architecture for fair comparison:

\begin{itemize}
    \item \textbf{Conv1:} Conv2d(4 $\rightarrow$ 32, kernel=8, stride=4) $\rightarrow$ LeakyReLU
    \item \textbf{Conv2:} Conv2d(32 $\rightarrow$ 64, kernel=4, stride=2) $\rightarrow$ LeakyReLU
    \item \textbf{Conv3:} Conv2d(64 $\rightarrow$ 64, kernel=3, stride=1) $\rightarrow$ LeakyReLU
    \item \textbf{Flatten:} Reshape to feature vector
    \item \textbf{FC:} Linear(conv\_output $\rightarrow$ 512) $\rightarrow$ LeakyReLU
\end{itemize}

\textbf{LPM-Specific Architecture}

\textbf{LPM Dynamics Model ($f_\theta$):}
\begin{itemize}
    \item Input: CNN Features
    \item Layer 1: Linear(512+actions $\rightarrow$ 512) $\rightarrow$ ReLU
    \item Decoder: ConvTranspose Decoder $\rightarrow$ Sigmoid
\end{itemize}

\textbf{LPM Error Model ($g_\phi$):}
\begin{itemize}
    \item Input: CNN Features
    \item Layer 1: Linear(512+actions $\rightarrow$ 256) $\rightarrow$ ReLU
    \item Layer 2: Linear(256 $\rightarrow$ 128) $\rightarrow$ ReLU
    \item Output Layer: Linear(128 $\rightarrow$ 1)
\end{itemize}

\textbf{Training Configuration}

\textbf{Shared RL Parameters (All Methods):}
\begin{itemize}
    \item Base Algorithm: PPO \cite{schulman2017proximal}
    \item Learning Rate: 1e-4
    \item Clip Parameter: 0.1
    \item PPO Epochs: 3
    \item Mini-batches: 8
    \item Entropy Coefficient: 0.001
    \item Value Loss Coefficient: 0.5
    \item Discount Factor ($\gamma$): 0.99
    \item GAE: True
    \item Processes: 64
    \item Steps per Update: 128
\end{itemize}

\textbf{LPM-Specific Parameters:}
\begin{itemize}
    \item Buffer Size: 128
\end{itemize}

\subsection{Computing Infrastructure}

All experiments were conducted on Google Colab Pro with the following specifications:

\textbf{Hardware:}
\begin{itemize}
    \item GPU: NVIDIA Tesla T4 (16GB VRAM), NVIDIA A100 (40GB VRAM), NVIDIA RTX Pro 6000 Blackwell (96GB VRAM)
    \item CPU: High-RAM runtime (51GB)
\end{itemize}

\subsection{Code Availability}

All source code for reproducing the experiments reported in this paper is available at \url{https://github.com/Akuna23Matata/LPM_exploration}
\end{document}